\documentclass{article}

\usepackage[main, final]{neurips_2026}

\usepackage[utf8]{inputenc} 
\usepackage[T1]{fontenc}    
\usepackage{hyperref}       
\usepackage{url}            
\usepackage{booktabs}       
\usepackage{amsfonts}       
\usepackage{nicefrac}       
\usepackage{microtype}      
\usepackage[usenames,dvipsnames]{xcolor}         
\usepackage{microtype}
\usepackage{graphicx}
\usepackage{subcaption}
\usepackage{amsmath}
\usepackage{mathtools}
\usepackage{amsthm}
\usepackage{amssymb}
\usepackage{dsfont}
\usepackage{pifont}
\usepackage{wrapfig}

\theoremstyle{plain}
\newtheorem{theorem}{Theorem}[section]
\newtheorem{proposition}[theorem]{Proposition}

\theoremstyle{definition}

\theoremstyle{remark}

\usepackage{subcaption}
\usepackage{algorithm}
\usepackage{algorithmic}

\title{Differentiable Systematic Resampling for \\ Variational Sequential Monte Carlo}

\author{%
  Fredrik~Cumlin \\
  Dept. of Information Science and Engineering\\
  KTH Royal Institute of Technology\\
  Stockholm, Sweden \\
  \texttt{fcumlin@kth.se} \\
  \And
  Saikat~Chatterjee \\
  Dept. of Information Science and Engineering\\
  KTH Royal Institute of Technology\\
  Stockholm, Sweden \\
  \texttt{sach@kth.se} \\
}

\begin{document}

\maketitle

\begin{abstract}
  Particle filters are a standard tool for nonlinear state estimation, but their resampling step is discrete, preventing gradient-based learning in variational sequential Monte Carlo. We introduce Differentiable Systematic Resampling (DSR), a temperature-controlled relaxation of systematic resampling, that preserves the CDF-ordered, banded structure of systematic resampling while enabling full gradient flow. DSR converges to exact systematic resampling as the temperature vanishes, and we prove a pointwise exponential convergence rate for the induced bias. Compared to optimal-transport-based differentiable resampling, DSR avoids iterative solvers and has substantially lower computational overhead. Experiments on stochastic dynamical systems and real-world handwriting data show that DSR achieves comparable or superior filtering and dynamics learning performance.
\end{abstract}

\section{Introduction}

Sequential Monte Carlo (SMC) methods, commonly known as particle filters (PFs) \cite{gordon1993novel,doucet2001sequential}, are standard tools for Bayesian state estimation in nonlinear and non-Gaussian dynamical systems, with applications in robotics, econometrics, computer vision, and epidemiology \cite{Särkkä_2013,naesseth2019elements,chopin2020introduction}. Particle filters approximate the filtering posterior by propagating and reweighting a set of particles using the system dynamics and incoming observations. In many applications, however, the state dynamics are unknown or only partially specified; that is, the transition model (system dynamics) is not available to the practitioner. One line of work addresses this by learning a state estimator directly from observations without an explicit particle representation \cite{danse, pdanse, vse, dns}; here we instead focus on variational sequential Monte Carlo (VSMC) \cite{naesseth2018variational,maddison2017filtering,le2018auto}, which jointly learns a parameterized transition model and proposal model by maximizing a variational lower bound on the marginal likelihood using gradient-based optimization. Importantly, VSMC requires the entire filtering recursion to be differentiable.

An essential component of SMC, and hence of VSMC, is resampling. During filtering, particle weights tend to degenerate, meaning that only one or a few particles carry most of the weight \cite{tutorial_particle_filter_2011}. Resampling mitigates this by replacing low-weight particles with copies of high-weight ones \cite{douc2005comparison,kitagawa1996monte}. Among standard resampling schemes, systematic resampling is widely adopted for its simplicity and low variance in practice \cite{douc2005comparison,kitagawa1996monte,tutorial_particle_filter_2011}. However, resampling relies on discrete ancestor assignments and is therefore inherently non-differentiable. While particle propagation and importance weighting can be differentiated via the reparameterization trick \cite{reparameterization_trick}, the resampling step blocks gradient flow and produces biased or high-variance gradient estimators \cite{naesseth2018variational}. This makes differentiable resampling a central challenge in VSMC. A standard approach for discrete operations in gradient-based learning is to replace them with temperature-controlled continuous relaxations that recover the discrete operation in the zero-temperature limit~\cite{jang2017categorical,maddison2017concrete}. We take this approach for the resampling step of VSMC, but rather than relaxing the ancestor categorical directly, we relax the assignment of systematic resampling, based on the cumulative distribution function (CDF) of the weights, hence preserving its CDF-ordered, banded structure.

Several differentiable relaxations of resampling have been proposed. Soft multinomial resampling~\cite{karkus2018particle} keeps ancestor sampling discrete but draws ancestors from a smoothed distribution that mixes the importance weights with a uniform distribution, and propagates gradients through the corresponding importance-weight correction. A drawback is that the underlying scheme is multinomial resampling, which has higher variance than residual and stratified resampling and, in practice, than systematic resampling~\cite{douc2005comparison}; hence, the method inherits an inferior starting point. Entropy-regularized optimal transport (EROT)~\cite{corenflos2021differentiable} instead formulates resampling as an optimal transport problem. However, EROT relies on recursive Sinkhorn updates that scale as $\mathcal{O}(N^2)$ per iteration, is sensitive to the choice of regularization parameter, and, at the regularization levels needed for stable training, produces diffuse transport plans that bear little resemblance to the sparse, CDF-ordered structure of systematic resampling (see Fig.~\ref{fig:T_combined}). However, neither approach preserves the structure of systematic resampling, which is the de facto standard at inference time due to its favorable variance properties~\cite{tutorial_particle_filter_2011}.

\begin{figure}
    \centering
    \includegraphics[width=0.7\linewidth]{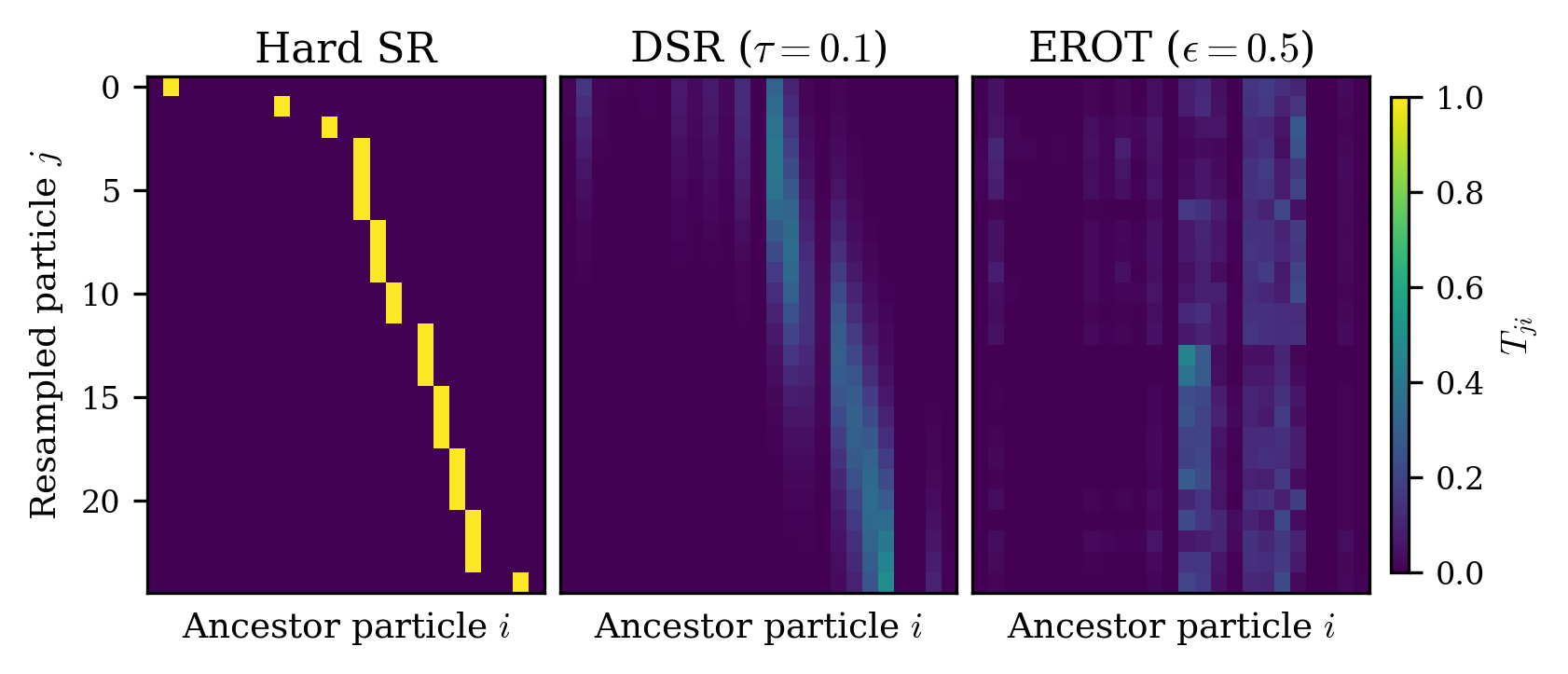}
    \caption{Transportation maps for systematic resampling (left), DSR (middle), and EROT (right), for a sequential importance resampling PF of the Lorenz-63 system. The transportation maps are calculated at step $150$ using $N=25$ particles, and DSR and EROT use the training relaxation parameters.}
    \label{fig:T_combined}
\end{figure}

In this paper, we introduce Differentiable Systematic Resampling (DSR), a temperature-controlled relaxation that directly relaxes the CDF-level ancestor assignment of systematic resampling using sigmoid functions. To further ensure that neighbours in the CDF correspond to spatially nearby particles, we equip DSR with Morton space-filling curve sorting, which reduces approximation error to systematic resampling. The relaxation introduces a bias that we prove vanishes exponentially as the temperature $\tau \to 0^+$. Our main contributions are as follows:
\begin{itemize}
\item We propose DSR, a differentiable relaxation that preserves the CDF-ordered, banded structure of systematic resampling while enabling full gradient flow.
\item We establish theoretical guarantees showing that DSR converges to exact systematic resampling, with a pointwise exponential rate under a positive-margin condition, as $\tau \to 0^+$.
\item We demonstrate that DSR achieves competitive or superior state estimation and dynamics learning in VSMC while training more than $2\times$ faster than EROT.
\end{itemize}
\section{Background}

\subsection{State space models and particle filters}

We consider state space models $\mathbf{x}_t \sim p(\mathbf{x}_t\vert \mathbf{x}_{t-1})$, $\mathbf{y}_t \sim p(\mathbf{y}_t\vert \mathbf{x}_t)$, where $\mathbf{x}_t\in\mathbb{R}^{d_x}$ is the latent state and $\mathbf{y}_t\in\mathbb{R}^{d_y}$ the observation. Particle filters approximate the filtering posterior $p(\mathbf{x}_t\vert \mathbf{y}_{1:t})\approx\sum_{i=1}^N w_t^{(i)}\delta(\mathbf{x}_t-\mathbf{x}_t^{(i)})$ using $N$ weighted particles $\{(\mathbf{x}_t^{(i)}, w_t^{(i)})\}_{i=1}^N$ at time $t$. During inference, particles are propagated through the dynamics, reweighted by the observation likelihood, and resampled to avoid weight degeneracy. Resampling replaces the weighted particle set with an equal-weight set by sampling ancestor indices according to the weights.

\subsection{Variational Sequential Monte Carlo}

Variational Sequential Monte Carlo (VSMC) \cite{naesseth2018variational} provides a framework for jointly learning a transition model $p_{\pmb\phi}(\mathbf{x}_t | \mathbf{x}_{t-1})$ and proposal distribution $q_{\pmb\theta}(\mathbf{x}_t | \mathbf{x}_{t-1}, \mathbf{y}_t)$ by maximizing the so-called surrogate evidence lower-bound (ELBO):
\begin{equation}
\mathcal{L}(\pmb\theta, \pmb\phi) = \mathbb{E}\left[\log \hat{p}(\mathbf{y}_{1:T})\right] = \mathbb{E}\!\left[\sum_{t=1}^T \log\left(\frac{1}{N} \sum_{i=1}^N w_t^{(i)}\right)\right] ,
\end{equation}
where the expectation is over all sampling and resampling operations in the particle filter (the weights before normalization are used). Algorithm~\ref{alg:vsmc} summarizes the VSMC recursion.
 
\begin{algorithm}[t]
\caption{Variational Sequential Monte Carlo (VSMC)}
\label{alg:vsmc}
\begin{algorithmic}[1]
\REQUIRE Initial particles $\{\tilde{\mathbf{x}}_0^{(i)}\}_{i=1}^N$, observations $\{\mathbf{y}_t\}_{t=1}^T$
\FOR{$t = 1$ to $T$}
    \STATE Sample $\mathbf{x}_t^{(i)} \sim q_{\pmb\theta}(\mathbf{x}_t | \tilde{\mathbf{x}}_{t-1}^{(i)}, \mathbf{y}_t)$ for $i=1,\dots,N$
    \STATE Compute weights $w_t^{(i)} =
    \frac{
    p_{\pmb\phi}(\mathbf{x}_t^{(i)} | \tilde{\mathbf{x}}_{t-1}^{(i)})
    p(\mathbf{y}_t | \mathbf{x}_t^{(i)})
    }{
    q_{\pmb\theta}(\mathbf{x}_t^{(i)} | \tilde{\mathbf{x}}_{t-1}^{(i)}, \mathbf{y}_t)
    }$; normalize $\sum_i w_t^{(i)} = 1$
    \STATE $\{\tilde{\mathbf{x}}_t^{(i)}\}_{i=1}^N
    \leftarrow
    \texttt{Resample}\big(\{\mathbf{x}_t^{(i)}, w_t^{(i)}\}_{i=1}^N\big)$ \hfill \textit{(discrete, non-differentiable)}
\ENDFOR
\end{algorithmic}
\end{algorithm}
 
Optimizing $\mathcal{L}$ requires differentiating through the entire filtering recursion. As shown by \cite{naesseth2018variational}, the gradient decomposes into two terms:
\begin{equation}\label{eq:grad_decomp}
\nabla \mathcal{L} = \underbrace{\mathbb{E}\!\left[\nabla \log \hat{p}(\mathbf{y}_{1:T})\right]}_{g_{\mathrm{rep}} \text{ (reparameterization)}} + \underbrace{\mathbb{E}\!\left[\log \hat{p}(\mathbf{y}_{1:T})\; \nabla \log \tilde{\varphi}(\mathbf{a}_{1:T-1}^{1:N} | \pmb\varepsilon_{1:T}^{1:N})\right]}_{g_{\mathrm{score}} \text{ (score function)}},
\end{equation}
where $\tilde{\varphi}$ is the marginal distribution over the discrete ancestor variables $\mathbf{a}_{1:T-1}^{1:N}$ given the reparameterized noise $\pmb\varepsilon_{1:T}^{1:N}$. The reparameterization term $g_{\mathrm{rep}}$ differentiates through the continuous particle propagation and weighting steps via the reparameterization trick~\cite{reparameterization_trick}. The score-function term $g_{\mathrm{score}}$ accounts for the discrete ancestor assignments introduced by resampling, which are not reparameterizable and can have high variance~\cite[cf.\ Fig.\ 3]{naesseth2018variational}.

In practice, $g_{\mathrm{score}}$ is dropped, yielding the biased gradient $\nabla \mathcal{L} \approx g_{\mathrm{rep}}$~\cite{naesseth2018variational}. This treats ancestor indices as fixed during backpropagation: gradients flow through the surviving particle values, but not through the ancestor selection itself, a limitation that becomes severe when weights are concentrated and resampling decisions dominate filtering performance (see Appendix~\ref{app:N_ablation}). Differentiable resampling methods address this by replacing the discrete ancestor selection with a continuous relaxation, so that $g_{\mathrm{rep}}$ flows through the full resampling mechanism. This yields a relaxed objective in place of the surrogate ELBO. At finite temperature the relaxation does not preserve expected replication counts, so the resulting likelihood estimator $\hat{p}(\mathbf{y}_{1:T})$ is no longer unbiased and the relaxed objective is a differentiable surrogate rather than a lower bound on $\log p(\mathbf{y}_{1:T})$ (see Section~\ref{sec:dsr_formulation}). The relaxed objectives have been shown to provide a richer learning signal for the transition model and proposal with improved state estimation and system identification across tasks~\cite{corenflos2021differentiable, vannierop2025deepvsmc, normalizing_flow_dpf, kloss2021train, cdf_placement_dpf}.

\section{Differentiable Systematic Resampling}
\label{sec:dsr}

Traditional resampling methods are discrete and non-differentiable, creating a bottleneck for gradient-based learning. We propose differentiable systematic resampling (DSR): a soft relaxation that converges to systematic resampling in the limit, while allowing gradients to flow through the resampling step. For ease of notation, we omit time subscripts in Sections~\ref{sec:dsr} and~\ref{sec:theory}; particle indices are instead denoted as subscripts.

\subsection{DSR formulation}\label{sec:dsr_formulation}

Recall that systematic resampling computes the CDF $F_i = \sum_{k=1}^i w_k$, draws a single offset $u_0 \sim \mathcal{U}(0, 1/N)$, and assigns $a_j = \min\{i : F_i \geq u_j\}$ where $u_j = u_0 + (j-1)/N$. Equivalently, $a_j$ is the unique index $i$ for which $F_{i-1} < u_j \leq F_i$, so the discrete assignment can be written as a matrix $A_{ij} = \mathds{1}(F_{i-1} < u_j \leq F_i)$.  This indicator is non-differentiable with respect to the weights, blocking gradient flow.

Consider $N = 3$ particles with weights $\mathbf{w} = (0.1, 0.7, 0.2)$, giving the CDF $\mathbf{F} = (0.1, 0.8, 1.0)$ and $F_0 = 0$. Systematic resampling draws a single offset $u_0 \sim \mathcal{U}(0, \tfrac{1}{3})$ and places the comb $u_j = u_0 + (j-1)/3$. For $u_0 = 0.05$ the comb points are $u_1 = 0.05$, $u_2 = 0.38$, $u_3 = 0.72$, and thresholding each against the CDF gives ancestors $a_1 = 1$ (since $F_0 < u_1 \leq F_1$), $a_2 = 2$ (since $F_1 < u_2 \leq F_2$), and $a_3 = 2$ (since $F_1 < u_3 \leq F_2$): the high-weight particle $2$ is replicated, the low-weight particle $3$ is dropped, and the assignment matrix is $A_{ij} = \mathds{1}(F_{i-1} < u_j \leq F_i)$. The only randomness is the single draw $u_0$; given $u_0$, the assignment is a deterministic thresholding of the comb against the CDF. The idea with DSR is to relax only this thresholding step and to leave the draw $u_0$ untouched.

Our strategy is to replace the indicator with a smooth surrogate that recovers it in a limiting sense, following the temperature-relaxation approach common for discrete operations in gradient-based learning~\cite{jang2017categorical,maddison2017concrete}. Writing the indicator as a telescoping difference,
\begin{equation}
    \mathds{1}(F_{i-1} < u \leq F_i) = \mathds{1}(u > F_{i-1}) - \mathds{1}(u > F_i),
\end{equation}
reduces the problem to relaxing each step indicator $\mathds{1}(u > F)$ separately. The sigmoid $\sigma(x) = (1 + e^{-x})^{-1}$ provides a natural smooth surrogate: $\sigma((u-F)/\tau) \to \mathds{1}(u > F)$ pointwise as the temperature $\tau \to 0^+$. Substituting into the telescoping identity with boundary conventions $F_0 = 0$, $F_{N} = 1$ gives the soft assignment:
\begin{equation}\label{eq:soft_s}
    s_i(u; \tau) = \sigma\!\left(\frac{u - F_{i-1}}{\tau}\right) - \sigma\!\left(\frac{u - F_i}{\tau}\right), \quad i=1,\dots, N
\end{equation}
Each $s_i(u;\tau) \in [0, 1]$ is differentiable in $u$, $F_{i-1}$, and $F_i$, and $\sum_i s_i(u;\tau) \to 1$ as $\tau \to 0^+$. At finite $\tau$, the sum $\sum_i s_i(u;\tau)$ is not necessarily one, so we normalize to obtain a proper convex combination. The transportation matrix and resampled particles are:
\begin{equation}\label{eq:soft_assignment}
    T_{ij}^{(\tau)} = \frac{s_i(u_j; \tau)}{\sum_{k=1}^N s_k(u_j; \tau)}, \qquad \tilde{\mathbf{x}}_j^{(\tau)} = \sum_{i=1}^N T_{ij}^{(\tau)} \mathbf{x}_i.
\end{equation}
Each column of $T^{(\tau)}$ sums to one, so $\tilde{\mathbf{x}}_j^{(\tau)}$ is a convex combination of the ancestor particles. The row sums, however, satisfy $\sum_{j=1}^N T^{(\tau)}_{ij} \neq N w_i$ in general, so DSR does not preserve expected replication counts at finite $\tau$. The temperature $\tau$ controls the relaxation between gradient flow and approximation fidelity: small $\tau$ gives a sharp transport plan close to systematic resampling but produces large gradient magnitudes, while large $\tau$ gives a diffuse transport plan with smoother gradients. We show in Section~\ref{sec:theory} that the approximation error of each resampled particle decays exponentially in $1/\tau$, and demonstrate in Section~\ref{sec:experiments} that $\tau = 0.1$ is a robust choice across all our experiments. Forming the full transport matrix $T^{(\tau)} \in \mathbb{R}^{N \times N}$ makes DSR $\mathcal{O}(N^2)$ per resampling step, compared with $\mathcal{O}(N)$ for hard systematic resampling; the resulting overhead is nonetheless small in practice due to batch processing on GPU (Table~\ref{tab:training_time})\footnote{Note that the $\mathcal{O}(N^2)$ cost could be reduced to $\mathcal{O}(NK)$ by keeping,
for each resampled particle $j$, only the $K$ ancestors around its hard ancestor $a_j$,
where the soft assignments concentrate. However, on a GPU the full matrix is cheap, hence, we keep it.}. An algorithmic overview of DSR is given in Appendix~\ref{sec:dsr_algorithm}.

\subsection{Particle ordering}

Since DSR operates on the empirical CDF, the particle ordering determines which particles interact through the soft assignments: particles adjacent in the CDF share mass in the rows of $T^{(\tau)}$. It is therefore desirable to order particles so that CDF neighbours are also spatial neighbours, which keeps the soft mixture more local. Thus, we sort particles along a Morton space-filling curve \cite{morton1966computer} before constructing the CDF; an algorithmic overview of this sorting is provided in Appendix~\ref{sec:dsr_algorithm}. Locality-preserving orderings have been shown to improve convergence properties of systematic resampling~\cite{gerber2019negative}; in DSR, they additionally ensure the soft mixture blends spatially nearby ancestors rather than distant particles (for an analysis of this, see Appendix~\ref{app:empirical_analysis}). The sorting costs $\mathcal{O}(N \log N)$ per resampling step, negligible compared to the overall resampling cost of $\mathcal{O}(N^2)$.

Morton sorting significantly improves the approximation to systematic resampling: at $N = 200$, it reduces the L2 distance between DSR and hard systematic resampled particles by $35\%$ at $\tau=0.01$ and by $15\%$ at $\tau=0.1$ (default used in experiments), compared to unsorted ordering (Appendix~\ref{app:empirical_analysis}). For filtering and transition model quality, Morton sorting offers a modest advantage at low SNR (Appendix~\ref{app:sorting_ablation}).

\section{Theoretical Analysis}\label{sec:theory}

We show that DSR is a consistent relaxation whose bias vanishes exponentially fast in $1/\tau$. Let $\{(\mathbf{x}_i, w_i)\}_{i=1}^N$ be particles with normalized weights $w_i > 0$, $F_i = \sum_{k=1}^i w_k$ the empirical CDF, and $T_{ij}^{(\tau)}$, $\tilde{\mathbf{x}}_j^{(\tau)}$ as defined in Eq.~\eqref{eq:soft_assignment}. The first result establishes that DSR recovers systematic resampling in the limit.

\begin{proposition}[Asymptotic exactness]\label{prop:conv_gksr_to_sr}
Fix $j \in [N]$ and let $a^*_j = \min\{i \in [N] : F_i \ge u_j\}$ denote the hard systematic resampling index. Then
$\lim_{\tau \to 0^+} T^{(\tau)}_{ij} = \mathds{1}(i = a^*_j)$ and
$\lim_{\tau \to 0^+} \tilde{\mathbf{x}}^{(\tau)}_j = \mathbf{x}_{a^*_j}$ almost surely.
\end{proposition}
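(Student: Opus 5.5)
The plan is to reduce everything to pointwise limits of sigmoids, after discarding a null event on which the comb point lands exactly on a CDF value. Since $u_0 \sim \mathcal{U}(0,1/N)$ is continuous, $u_j = u_0 + (j-1)/N$ is also continuous, so the event $\{u_j = F_i \text{ for some } i \in \{0,\dots,N\}\}$ is a finite union of single points and has probability zero. We work on its complement. There $u_j \neq F_i$ for all $i$, and in particular $u_j \in (0,1)$ satisfies $F_{a^*_j-1} < u_j < F_{a^*_j}$. This uses the characterization of $a^*_j$ as the unique index with $F_{i-1} < u_j \le F_i$ from Section~\ref{sec:dsr_formulation}, together with the fact that $F_0 = 0 < u_j$ and $F_N = 1$.

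First we compute the limit of each $s_i(u_j;\tau)$ in Eq.~\eqref{eq:soft_s}. For any fixed $F \neq u_j$, the quantity $(u_j - F)/\tau$ tends to $+\infty$ or $-\infty$, so $\sigma((u_j-F)/\tau) \to \mathds{1}(u_j > F)$. Applying this to $F_{i-1}$ and $F_i$ gives
\[
\lim_{\tau\to0^+} s_i(u_j;\tau) = \mathds{1}(u_j > F_{i-1}) - \mathds{1}(u_j > F_i) = \mathds{1}(F_{i-1} < u_j < F_i).
\]
Because $F$ is strictly increasing (all $w_i > 0$) and $u_j$ avoids the $F_i$, this limit equals $\mathds{1}(i = a^*_j)$. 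Summing over the finitely many $i$, the denominator $\sum_k s_k(u_j;\tau)$ converges to $1$; equivalently, the sum telescopes to $\sigma(u_j/\tau) - \sigma((u_j-1)/\tau) \to 1$.

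The only delicate point is the normalization in Eq.~\eqref{eq:soft_assignment}. We need the denominator to be positive, so that $T^{(\tau)}$ is well defined, and to have a nonzero limit, so that the quotient rule for limits applies. The telescoped form settles both at once: for $\tau > 0$ it equals $\sigma(u_j/\tau) - \sigma((u_j-1)/\tau)$, which is strictly positive by strict monotonicity of $\sigma$, and it tends to $1$. The quotient rule then yields $\lim_{\tau\to0^+} T^{(\tau)}_{ij} = \mathds{1}(i=a^*_j)$ for every $i$.

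Finally, $\tilde{\mathbf{x}}^{(\tau)}_j = \sum_i T^{(\tau)}_{ij}\mathbf{x}_i$ is a finite linear combination of fixed vectors with coefficients that converge. Its limit is therefore $\sum_i \mathds{1}(i=a^*_j)\mathbf{x}_i = \mathbf{x}_{a^*_j}$. This holds on the complement of the null event identified at the start, which gives almost sure convergence. The argument is elementary throughout; the main care is in the measure-zero exclusion, since at $u_j = F_i$ the sigmoid converges to $1/2$ and the limiting matrix would not be the hard indicator.
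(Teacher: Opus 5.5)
Your proof is correct and follows essentially the same route as the paper. You discard the null event $\{u_j = F_i\}$, take pointwise sigmoid limits to get $s_i(u_j;\tau) \to \mathds{1}(i = a^*_j)$, pass the limit through the normalization, and then through the finite sum; you merge the paper's three cases into one line via $\mathds{1}(u_j > F_{i-1}) - \mathds{1}(u_j > F_i)$. Your telescoping observation $\sum_k s_k(u_j;\tau) = \sigma(u_j/\tau) - \sigma((u_j-1)/\tau) > 0$ is a small but welcome addition: it shows $T^{(\tau)}$ is well defined for every $\tau > 0$, which the paper takes for granted.
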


The second result quantifies the approximation error for finite $\tau$.

\begin{proposition}[Exponential convergence rate]\label{prop:exponential_convergence}
Let $D = \sup_{i,j}\|\mathbf{x}_i-\mathbf{x}_j\|_2$ denote the particle diameter and
\begin{equation}
\Delta_j := \min\{u_j - F_{a^*_j-1},\; F_{a^*_j} - u_j\} > 0 \ \text{a.s.}
\end{equation}
the margin between $u_j$ and the nearest CDF boundary. Then there exists $\tau_0 > 0$ such that for all $\tau \leq \tau_0$,
\begin{equation}\label{eq:exp_bound}
\|\tilde{\mathbf{x}}^{(\tau)}_j - \mathbf{x}_{a^*_j}\|_2 \leq 2DN \exp\!\left(-\frac{\Delta_j}{\tau}\right).
\end{equation}
\end{proposition}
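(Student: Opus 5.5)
The plan is to write the error as a weighted sum of off-target mass and bound that mass with elementary sigmoid tail estimates. Fix $j$, write $a=a^*_j$, $S=\sum_k s_k(u_j;\tau)$, and $\Delta=\Delta_j$. Since $\sum_i T^{(\tau)}_{ij}=1$,
\begin{equation*}
\tilde{\mathbf{x}}^{(\tau)}_j-\mathbf{x}_a=\sum_{i\neq a}T^{(\tau)}_{ij}(\mathbf{x}_i-\mathbf{x}_a),
\end{equation*}
so $\|\tilde{\mathbf{x}}^{(\tau)}_j-\mathbf{x}_a\|_2\le D\sum_{i\ne a}T^{(\tau)}_{ij}=D\,(1-T^{(\tau)}_{aj})$. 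It therefore suffices to show $\sum_{i\neq a} s_i/S\le 2N e^{-\Delta/\tau}$ for $\tau\le\tau_0$.

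\textbf{Numerator.} The CDF boundaries $F_0=0\le\dots\le F_{a-1}<u_j<F_a\le\dots\le F_N=1$ all lie at distance at least $\Delta$ from $u_j$. For $i\neq a$, $s_i$ is a difference of two sigmoids whose arguments lie on the same side of $u_j$.
\begin{itemize}
\item If $i>a$, both arguments are $\le -\Delta/\tau$. Hence $s_i\le\sigma((u_j-F_{i-1})/\tau)\le\sigma(-\Delta/\tau)\le e^{-\Delta/\tau}$.
\item If $i<a$, both arguments are $\ge\Delta/\tau$. Hence $s_i\le 1-\sigma((u_j-F_i)/\tau)\le\sigma(-\Delta/\tau)\le e^{-\Delta/\tau}$.
\end{itemize}
Summing over the at most $N-1$ indices gives $\sum_{i\ne a}s_i\le (N-1)e^{-\Delta/\tau}$. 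A sharper telescoping bound of $2e^{-\Delta/\tau}$ is available, but the crude bound already fits the target constant.

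\textbf{Denominator.} Bound $S$ from below via $S\ge s_a=\sigma((u_j-F_{a-1})/\tau)-\sigma((u_j-F_a)/\tau)\ge\sigma(\Delta/\tau)-\sigma(-\Delta/\tau)=1-2\sigma(-\Delta/\tau)\ge 1-2e^{-\Delta/\tau}$. Choose $\tau_0=\Delta/\log 4$, so that $e^{-\Delta/\tau}\le 1/4$ for $\tau\le\tau_0$ and hence $S\ge 1/2$. Then
\begin{equation*}
1-T^{(\tau)}_{aj}\le\frac{\sum_{i\ne a}s_i}{S}\le 2(N-1)e^{-\Delta/\tau}\le 2Ne^{-\Delta/\tau},
\end{equation*}
and multiplying by $D$ yields \eqref{eq:exp_bound}.

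\textbf{Margin positivity and main obstacle.} Positivity of $\Delta_j$ a.s. holds because $u_0$ is continuous, so $u_j$ hits the finitely many values $F_i$ with probability zero. The boundary conventions $F_0=0$, $F_N=1$ need a small check: $u_j\in(0,1)$ a.s., so these points are also at positive distance. The only delicate point is the denominator, since normalization can in principle inflate the off-target ratio. This is handled by the lower bound $S\ge s_a$, which forces $\tau_0$ to depend on $\Delta_j$ (i.e., on $u_0$) rather than being uniform. I expect this dependence to be acceptable given the statement's "there exists $\tau_0$".
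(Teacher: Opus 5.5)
Your proposal is correct and follows the paper's proof essentially step for step. It uses the same reduction to $D\sum_{i\neq a^*_j}T^{(\tau)}_{ij}$, the same one-sided sigmoid tail bounds $s_i(u_j;\tau)\le e^{-\Delta_j/\tau}$ for $i\neq a^*_j$, and the same choice $\tau_0=\Delta_j/\log 4$, which guarantees $\sum_k s_k(u_j;\tau)\ge s_{a^*_j}(u_j;\tau)\ge \tfrac12$.

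Your side remark is also right and worth keeping. The sums over $i<a^*_j$ and $i>a^*_j$ telescope to $\sigma(u_j/\tau)-\sigma((u_j-F_{a^*_j-1})/\tau)$ and $\sigma((u_j-F_{a^*_j})/\tau)-\sigma((u_j-1)/\tau)$, each at most $e^{-\Delta_j/\tau}$. This gives the $N$-free bound $4De^{-\Delta_j/\tau}$, which removes the factor $N$ that the paper's argument carries.
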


The proofs are in Appendix~\ref{sec:proof_of_props}.

Proposition~\ref{prop:exponential_convergence} is primarily a qualitative guarantee: the bound depends on the particle diameter $D$, the count $N$, and the margin $\Delta_j$ between $u_j$ and the nearest CDF boundary. The dominant term is $\exp(-\Delta_j/\tau)$, so concentrated weights (large $\Delta_j$) yield fast convergence while near-uniform weights, where $\Delta_j$ scales as $\mathcal{O}(1/N)$, bound the worst-case rate. The bound holds pointwise in $j$ under the positive-margin event $\Delta_j > 0$ (which holds almost surely); it is not uniform over offsets, since $\Delta_j$ can be made arbitrarily small by the weight and offset configuration, so no uniform exponential rate holds. Empirically, the per-particle L2 distance between DSR and hard systematic resampling decreases as $\tau \to 0^+$, from $0.36$ at $\tau=0.1$ to $0.10$ at $\tau=0.01$ for $N=25$ particles (Appendix~\ref{app:empirical_analysis}).

\section{Experiments}\label{sec:experiments}

We evaluate DSR on synthetic and real-world state estimation and dynamics learning tasks. The experiments assess four aspects: (1) training-time filtering posterior quality $\hat{p}(\mathbf{x}_t| \mathbf{y}_{1:t})$; (2) state estimation accuracy at inference time in standard SMC; (3) transition model quality $\hat{p}(\mathbf{x}_t|\mathbf{x}_{t-1})$; and (4) computational efficiency.

\subsection{Experimental Setup}

We consider nonlinear state-space models where the transition and proposal models are parameterized by fully connected neural networks and learned jointly under the VSMC objective. All methods use $N = 25$ particles unless stated otherwise (Appendix~\ref{app:N_ablation} reports a sweep over $N \in \{25, 50, 100, 200\}$), the same Adam~\cite{adam} configuration, identical network architectures, and gradient clipping at norm $5.0$. We clip the estimated log-likelihood below to avoid extreme instability in training, see Appendix~\ref{sec:training_arch_config}. For DSR and EROT, the gradient with respect to the transport plan is clipped elementwise to $[-1, 1]$, as in the implementation \cite{corenflos2021differentiable}. Results are averaged over $10$ independent runs ($3$ for the linear Gaussian SSM); we report mean $\pm$ standard deviation. Training configuration can be found in the Appendix~\ref{sec:training_arch_config}.

We evaluate on three settings: (1) a 4-dimensional \textit{Linear Gaussian SSM}, where the Kalman filter gives the optimal posterior; (2) \textit{Lorenz-63}, a 3D chaotic attractor with a nonlinear camera observation map; and (3) \textit{CharacterTrajectories}, real-world handwriting dynamics with 3D position tracking. We use different signal-to-measurement noise (SMNR) ratios to experiment with different noise levels (definition of SMNR can be found in Appendix~\ref{app:SMNR}).

We report NMSE (dB) for state estimation, one-step MSE and five-step MSE for transition model quality, log-likelihood (LL) and Kullback--Leibler divergence (KLD) for distribution matching, and wall-clock training time. Full definitions are in Appendix~\ref{sec:evaluation_measures}.

We compare DSR to three resampling baselines: \textit{Hard}, standard systematic resampling~\cite{jonschkowski2018differentiable}, which corresponds to estimating the ELBO gradient via $g_{\mathrm{rep}}$ in Eq.~\eqref{eq:grad_decomp}; \textit{Soft}, relaxation of multinomial resampling with $\alpha = 0.5$~\cite{karkus2018particle}; and \textit{EROT}, entropy-regularized optimal-transport resampling with $\varepsilon = 0.5$ and at most $100$ Sinkhorn iterations, with early stopping~\cite{corenflos2021differentiable}. All methods produce biased estimates of the surrogate ELBO gradient. Our code is available at \url{https://github.com/fcumlin/differentiable-systematic-resampling.git} and \href{https://github.com/SachLab/DSR/}{https://github.com/SachLab/DSR/}.

\subsection{Linear Gaussian State Space Model}\label{sec:linear_gaussian_experiment}

To evaluate DSR in a controlled setting where the ground truth is analytically available, we consider a 4-dimensional linear Gaussian state space model
\begin{equation}
    \mathbf{x}_t = \mathbf{A} \mathbf{x}_{t-1} + \mathbf{e}_t, \quad \mathbf{y}_t = \mathbf{x}_t + \mathbf{w}_t,
\end{equation}
with $\mathbf{e}_t \sim \mathcal{N}(\mathbf{0}, \sigma_e^2 \mathbf{I}_4)$, $\mathbf{w}_t \sim \mathcal{N}(\mathbf{0}, \sigma_w^2 \mathbf{I}_4)$, process noise at $-10$ dB and observation noise set by the SMNR. The true transition matrix is a damped block-diagonal rotation $\mathbf{A} = 0.9 \cdot \mathrm{blockdiag}(\mathbf{R}(\theta_1), \mathbf{R}(\theta_2))$ with $\theta_1 = 0.3$, $\theta_2 = 0.5$ and $\mathbf{R}(\theta)$ the $2\times2$ rotation matrix. Since the model is linear Gaussian, the Kalman filter gives the optimal posterior at each $t$, which lets us compare the training-time particle distribution against a Bayesian oracle via KLD. We parameterize the transition model as $\hat{\mathbf{A}} \in \mathbb{R}^{4\times4}$ with a learnable diagonal covariance. We train with $N = 25$ particles on $1000$ trajectories of length $T = 100$ at SMNR $= 10$ dB.

\begin{table}[ht]
    \centering
    \caption{Performance on the linear Gaussian SSM ($d_x = 4$, SMNR $= 10$ dB). KLD KF is the training-time divergence to the Kalman posterior; NMSE is evaluated with $N = 1000$ particles and hard systematic resampling; $\|A - \hat{A}\|_F$ measures recovery of the true transition matrix. Results show mean $\pm$ std over $3$ runs.}
    \label{tab:linear_ssm}
    \begin{tabular}{lccc}
        \toprule
        Method & KLD KF & NMSE (dB) & $\|A - \hat{A}\|_F$ \\
        \midrule
        Hard & $\mathbf{8.64} {\scriptstyle\pm0.02}$ & $\underline{-11.33} {\scriptstyle\pm 0.06}$ & $0.035 {\scriptstyle\pm 0.001}$ \\
        Soft & $\underline{9.72} {\scriptstyle\pm 0.05}$ & $-10.33 {\scriptstyle\pm 0.07}$ & $0.025 {\scriptstyle\pm 0.002}$ \\
        EROT & $12.55 {\scriptstyle\pm 0.03}$ & $-11.31 {\scriptstyle\pm 0.02}$ & $\underline{0.022} {\scriptstyle\pm 0.004}$ \\
        DSR  & $12.30 {\scriptstyle\pm 0.03}$ & $\mathbf{-11.34} {\scriptstyle\pm 0.01}$ & $\mathbf{0.019} {\scriptstyle\pm 0.001}$ \\
        \bottomrule
    \end{tabular}
\end{table}

Hard achieves the best training-time posterior (KLD $8.64$ vs.\ $12.30$ for DSR), yet DSR recovers the true transition matrix most accurately ($\|A - \hat{A}\|_F$ of $0.019$ vs.\ $0.035$). A better training-time posterior therefore does not imply a better gradient signal for parameter learning, which is consistent with the gradient-norm analysis on Lorenz-63 (Section~\ref{sec:gradient_norm}), where DSR's smoother gradients lead to better downstream performance despite comparable point-wise posterior quality. Appendix~\ref{app:linear_gaussian_extra} reports two further linear Gaussian studies: an evaluation-time decomposition of the marginal likelihood, and a comparison of the DSR gradient against an unbiased VSMC gradient.

\subsection{Lorenz-63}

We evaluate all methods on the (stochastic) Lorenz-63 attractor with a nonlinear camera measurement map, following \cite{vannierop2025deepvsmc}. The Lorenz-63 system is a 3-dimensional chaotic attractor given by the discretized dynamics
\begin{equation}
\mathbf{x}_{t+1} = F(\mathbf{x}_t)\mathbf{x}_t + \mathbf{e}_t, \quad \mathbf{e}_t \sim \mathcal{N}(0, \sigma_e^2 I),
\end{equation}
where $F(\mathbf{x}_t) = \exp\left(\begin{bmatrix} -10 & 10 & 0 \\ 28 & -1 & -x_{t,1} \\ 0 & x_{t,1} & -8/3 \end{bmatrix} \Delta\tau\right)$, with $\Delta\tau = 0.02$ and $\sigma_e^2 = 0.1$ corresponding to $-10$ dB process noise. The values follow \cite{danse, pdanse}. We use a 2D camera model with Gaussian point spread functions over an $8 \times 8$ pixel grid, following \cite{buchnik2024latentkalman, vannierop2025deepvsmc}. The grid points $\{g_i\}_{i=1}^{64}$ are uniformly distributed in the rectangle $[-30, 30] \times [-40, 40]$. The state is projected into the camera plane according to its first two coordinates, with the third coordinate representing depth. The measurement function $\mathbf{h}: \mathbb{R}^3 \to \mathbb{R}^{64}$ is given by
\begin{equation}\label{eq:lorenz_obs_map}
\mathbf{h}(\mathbf{x}_t)_i = 10 \exp\left(-\frac{0.5}{7 + x_{t,3}} \left\|g_i - \begin{bmatrix} x_{t,1} \\ x_{t,2} \end{bmatrix}\right\|_2^2\right),
\end{equation}
where $\mathbf{h}(\mathbf{x}_t)_i$ represents the intensity at pixel $i$. The observations are given by $\mathbf{y}_t = \mathbf{h}(\mathbf{x}_t) + \mathbf{w}_t$ with $\mathbf{w}_t \sim \mathcal{N}(\mathbf{0}, \sigma_w^2 \mathbf{I})$, where $\sigma_w^2$ is determined by the signal-to-measurement noise ratio (SMNR). For training, we use $1000$ trajectories of observations $\mathbf{y}_{1:T}$ with $T=100$; we experiment with two SMNR levels: $10$ dB and $0$ dB. A visualization of the Lorenz-63 system under the camera mapping is shown in Fig.~\ref{fig:lorenz_camera}.

We evaluate both state estimation and transition model quality (Table~\ref{tab:lorenz63_combined}). State estimation uses $10$ held-out trajectories with $T=1000$ and the systematic resampling at inference; the transition model $f_\theta$ is evaluated directly on held-out trajectories.

\begin{figure*}[t]
    \centering
    \includegraphics[width=0.9\textwidth]{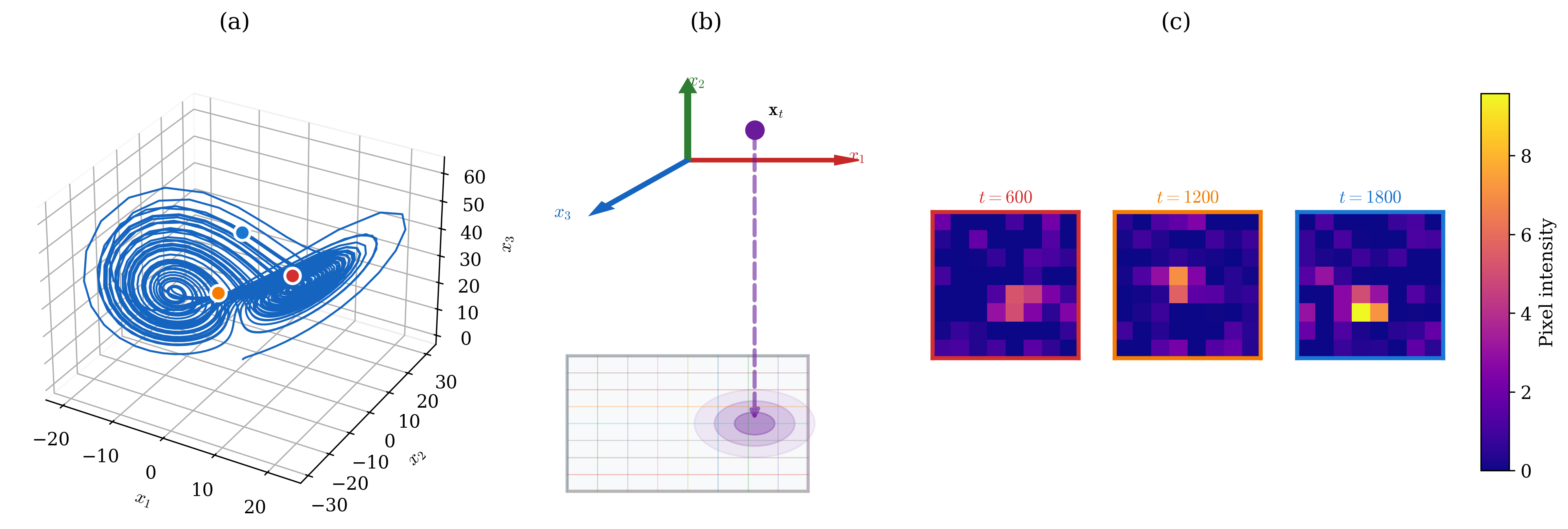}
    \caption{\textbf{Lorenz-63 system with nonlinear camera measurement model.} 
    (a) Sample state trajectory of Lorenz-63. Colored points indicate three observation times that are shown in the right-most figure. 
    (b) Camera projection geometry: the 3D state $\mathbf{x}_t = (x_1, x_2, x_3)$ projects onto an 8×8 pixel grid via a depth-dependent Gaussian point spread function (Eq.~\ref{eq:lorenz_obs_map}). The first two coordinates determine the 2D position, while the third coordinate (depth) controls the spread of the Gaussian.
    (c) Three sample camera observations at SMNR = 0~dB.}
    \label{fig:lorenz_camera}
\end{figure*}

\begin{table}[t]
\centering
\caption{Performance comparison on the Lorenz-63 system at SMNR = 10 and 0 dB. Results show mean $\pm$ std over 10 runs. Best in \textbf{bold}, second-best \underline{underlined}.}
\label{tab:lorenz63_combined}
\setlength{\tabcolsep}{3pt}
\small
\begin{tabular}{lccccc}
\toprule
& \multicolumn{1}{c}{State Estimation} & \multicolumn{4}{c}{Transition Model Quality} \\
\cmidrule(lr){2-2} \cmidrule(lr){3-6}
Method & NMSE (dB) & MSE & MSE-5-step & LL & KLD \\
\midrule
\multicolumn{6}{c}{SMNR = $10$ dB} \\
\midrule
Hard & $\underline{-28.86}{\scriptstyle\pm 0.41}$ & $\underline{0.55}{\scriptstyle\pm 0.19}$ & $5.50{\scriptstyle\pm 3.68}$ & $\underline{-2.26}{\scriptstyle\pm 0.25}$ & $\underline{5.41}{\scriptstyle\pm 1.47}$ \\
Soft & $-27.82{\scriptstyle\pm 0.45}$ & $0.86{\scriptstyle\pm 0.49}$ & $6.45{\scriptstyle\pm 3.92}$ & $-2.55{\scriptstyle\pm 0.48}$ & $6.90{\scriptstyle\pm 2.43}$ \\
EROT & $-28.76{\scriptstyle\pm 0.31}$ & $0.56{\scriptstyle\pm 0.24}$ & $\underline{3.39}{\scriptstyle\pm 2.59}$ & $-2.32{\scriptstyle\pm 0.20}$ & $5.74{\scriptstyle\pm 1.03}$ \\
DSR  & $\mathbf{-29.94}{\scriptstyle\pm 0.28}$ & $\mathbf{0.47}{\scriptstyle\pm 0.24}$ & $\mathbf{3.38}{\scriptstyle\pm 2.11}$ & $\mathbf{-1.98}{\scriptstyle\pm 0.27}$ & $\mathbf{3.60}{\scriptstyle\pm 1.04}$ \\
\midrule
\multicolumn{6}{c}{SMNR = $0$ dB} \\
\midrule
Hard & $-11.73{\scriptstyle\pm 1.75}$ & $0.96{\scriptstyle\pm 0.88}$ & $9.64{\scriptstyle\pm 13.71}$ & $\mathbf{-3.04}{\scriptstyle\pm 0.40}$ & $\mathbf{12.40}{\scriptstyle\pm 3.95}$ \\
Soft & $-11.69{\scriptstyle\pm 1.65}$ & $\underline{0.92}{\scriptstyle\pm 0.08}$ & $\underline{8.19}{\scriptstyle\pm 6.47}$ & $-3.17{\scriptstyle\pm 0.14}$ & $13.87{\scriptstyle\pm 2.32}$ \\
EROT & $\underline{-12.57}{\scriptstyle\pm 1.64}$ & $4.79{\scriptstyle\pm 5.65}$ & $48.41{\scriptstyle\pm 60.72}$ & $-4.28{\scriptstyle\pm 1.54}$ & $33.82{\scriptstyle\pm 29.79}$ \\
DSR  & $\mathbf{-18.28}{\scriptstyle\pm 0.46}$ & $\mathbf{0.54}{\scriptstyle\pm 0.33}$ & $\mathbf{3.70}{\scriptstyle\pm 2.87}$ & $\underline{-3.06}{\scriptstyle\pm 0.24}$ & $\underline{13.41}{\scriptstyle\pm 1.94}$ \\
\bottomrule
\end{tabular}
\end{table}
At $10$ dB SMNR, DSR achieves the best NMSE ($-29.94$ dB) with low run-to-run variance, and the best MSE, LL, and KLD on the transition model, matching EROT on multi-step prediction. Hard and Soft struggle more on longer horizons.

The gap widens at $0$ dB SMNR, where DSR improves NMSE by $5.7$ dB over the second-best method (EROT at $-12.57$ dB). At low SNR, particle weights become highly concentrated and amplify noisy gradients; DSR's smaller and steadier gradients (Table~\ref{tab:gradient_stats}) yield more stable parameter learning. The resulting transition model is not best on every one-step metric (LL, KLD), but produces particles that filter more effectively at inference time, since one-step metrics from true states do not capture error compounding during sequential filtering.

The large MSE-5-step standard deviation of EROT at $0$ dB ($60.72$) raises the question of whether this reflects the method or the choice of $\varepsilon$. We therefore evaluate EROT at $\varepsilon = 1$ as well (Appendix~\ref{app:erot_eps}). Increasing $\varepsilon$ stabilizes training (MSE-5-step std drops to $7.58$) and improves NMSE to $-13.47$ dB, but DSR still outperforms EROT on every metric. 

\subsubsection{Gradient norm}\label{sec:gradient_norm}

\begin{figure}[t]
\centering
\includegraphics[width=0.6\textwidth]{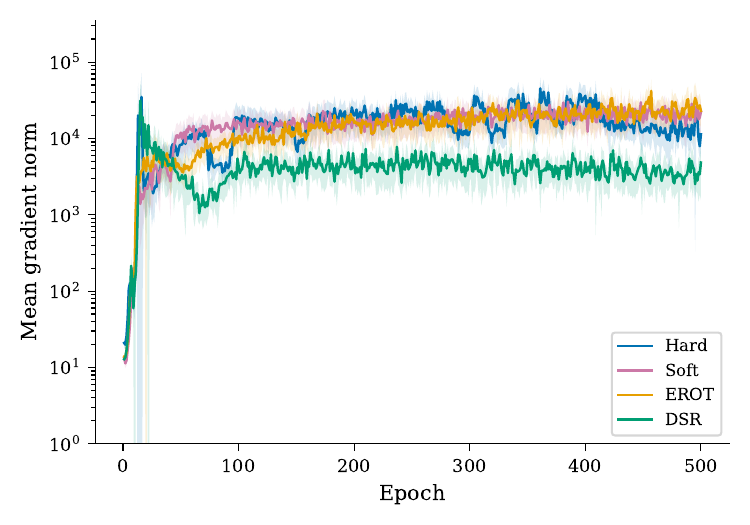}
\caption{Norm of the proposal network's gradient before clipping during training on Lorenz-63 (SMNR $=10$~dB). Solid: per-epoch mean over batches, averaged over the $10$ runs; shaded: $\pm 1$ std over runs.}
\label{fig:gradient_norms}
\end{figure}

\begin{wraptable}{r}{0.42\textwidth}
\centering
\vspace{-\baselineskip}
\caption{Proposal gradient norm before clipping on Lorenz-63 (SMNR $= 10$ dB), averaged over runs and epochs.}
\label{tab:gradient_stats}
\small
\begin{tabular}{lcc}
\toprule
Method & Mean & Within-epoch std \\
\midrule
Hard & $17399$ & $17253$ \\
Soft & $17013$ & $8314$ \\
EROT & $15771$ & $6918$ \\
DSR  & $\mathbf{4504}$ & $\mathbf{2720}$ \\
\bottomrule
\end{tabular}
\end{wraptable}

To assess training stability, we study the norm of the proposal network's gradient, before clipping, under each resampling method on Lorenz-63 at SMNR $= 10$~dB. Figure~\ref{fig:gradient_norms} shows the mean gradient norm per epoch. DSR maintains gradient norms around $4 \times 10^3$ throughout training, whereas Hard and EROT exhibit norms in the $10^4$--$10^5$ range with substantially larger batch-to-batch fluctuations. Table~\ref{tab:gradient_stats} confirms this across runs: DSR produces gradient norms roughly $3\times$ lower than Hard and EROT, with a $2.5-6\times$ lower within-epoch standard deviation. Lower gradient norms do not in themselves imply a better gradient signal, but combined with the improved performance of DSR, the pattern indicates more stable learning.

\subsubsection{Scaling with particle count}
\label{sec:lorenz_N}

To assess how each method scales with the number of particles, we train on Lorenz-63 at SMNR $= 10$~dB with $N \in \{25, 50, 100, 200\}$, keeping all other hyperparameters fixed. DSR is the only method that trains stably at every tested $N$: its NMSE improves from $-29.94$~dB at $N = 25$ to $-30.35$~dB at $N = 200$ with run-to-run standard deviation below $0.3$~dB throughout. Hard and Soft are competitive at $N \leq 100$ but collapse at $N = 200$, with NMSE degrading to $-20.42$ and $-21.52$~dB and standard deviations above $7$ and $12$~dB, respectively, which is caused by training instability. EROT does not complete at $N = 200$: the $\mathcal{O}(N^2)$ transport matrix combined with at most $100$ Sinkhorn iterations per resampling step and the autodiff tape over $T = 100$ time steps exceeds available GPU memory. Results and a discussion are reported in Appendix~\ref{app:N_ablation}.

\subsection{Character typing}

To demonstrate applicability to real-world data, we evaluate on the CharacterTrajectories dataset \cite{williams2006character} from the UEA multivariate time series archive~\citep{bagnall2018uea}, $2,858$ pen-tip recordings of $20$ lower-case characters captured on a WACOM tablet at $200$ Hz.

\begin{figure}[t]
    \centering
    \includegraphics[width=0.7\columnwidth]{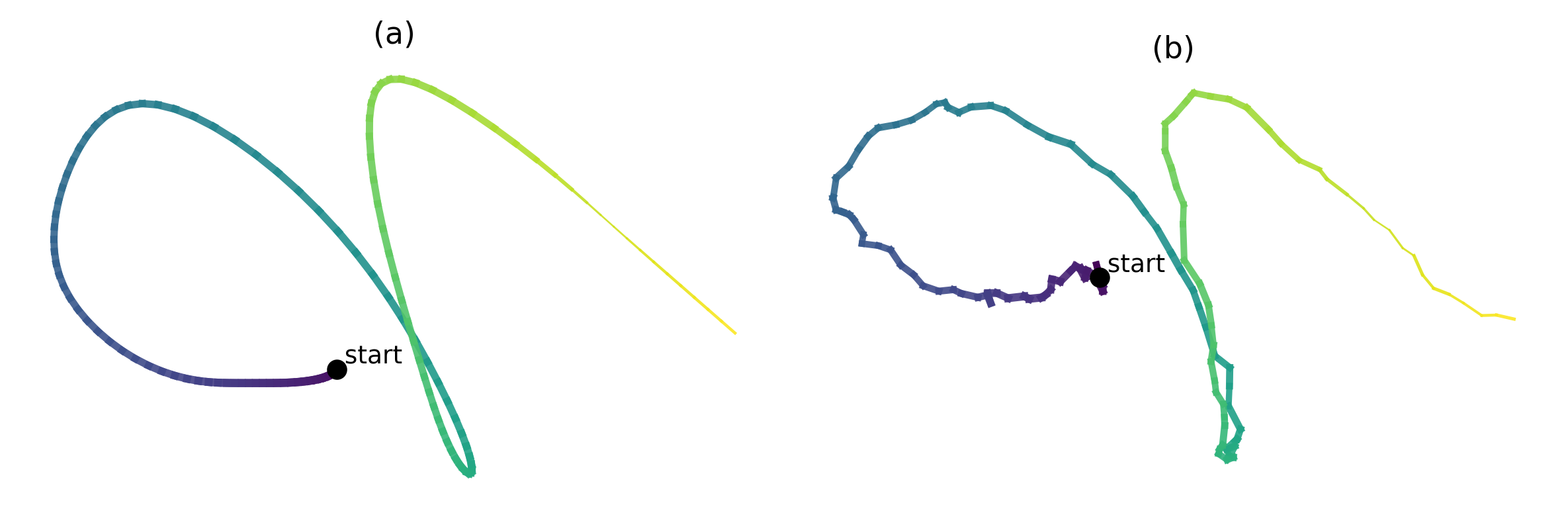}
    \caption{Handwriting dynamics from the CharacterTrajectories dataset. Sample trajectory of the letter `a' captured at 200~Hz on a WACOM tablet. (a) pen-tip path obtained by integrating the recorded x and y velocities; line thickness shows pen force and colour shows time from start (black dot) to finish. (b) the same after adding noise to the recorded channels at SMNR $= 10$ dB.}
    \label{fig:character_trajectory}
\end{figure}

The raw observations $\mathbf{y}_t \in \mathbb{R}^3$ are the pen-tip $x$ and $y$ velocities and the pen force. To give the learned dynamics room to encode temporal information not directly observed, we augment the state to $\mathbf{x}_t = [\mathbf{s}_t^{(1)}, \mathbf{s}_t^{(2)}, \mathbf{s}_t^{(3)}]^\top \in \mathbb{R}^9$ with three 3D latent components, and observe only the last:
\begin{equation}
\mathbf{y}_t = \begin{bmatrix} 0_3 & 0_3 & I_3 \end{bmatrix} \mathbf{x}_t + \mathbf{w}_t, \qquad \mathbf{w}_t \sim \mathcal{N}(0, \mathrm{diag}(\sigma_{w,1}^2, \sigma_{w,2}^2, \sigma_{w,3}^2)),
\end{equation}
each $\sigma_{w,k}^2$ set by the SMNR of channel $k$ in that recording. The remaining six dimensions are latent and learned end-to-end. We use windows of $178$ samples. All recordings of a character have the same length in this release ($109$–$205$ samples), so only six characters are long enough (a, m, o, q, w, z): we train on the $526$ such recordings among the first $2,000$ and report NMSE on the next $10$, all of the character m, with $N = 25$ particles for training and $N = 1000$ for evaluation. A sample trajectory is shown in Fig.~\ref{fig:character_trajectory}.

\begin{table}[htbp]
\centering
\caption{State-estimation NMSE on CharacterTrajectories ($10$ held-out recordings of the character m; $N = 1000$, hard systematic resampling). Results show mean $\pm$ std over 10 runs.}
\label{tab:performance_character}
\begin{tabular}{lcc}
\toprule
& \multicolumn{2}{c}{\textbf{NMSE (dB)}} \\
\cmidrule(lr){2-3}
Method & SMNR = 10 dB & SMNR = 0 dB \\
\midrule
Hard & $-14.65{\scriptstyle\pm 0.20}$ & $-7.66{\scriptstyle\pm 0.35}$ \\
Soft & $-14.69{\scriptstyle\pm 0.29}$ & $-7.08{\scriptstyle\pm 1.12}$ \\
EROT & $\underline{-14.88}{\scriptstyle\pm 0.17}$ & $\underline{-7.77}{\scriptstyle\pm 1.26}$ \\
DSR & $\mathbf{-15.21}{\scriptstyle\pm 0.15}$ & $\mathbf{-8.59}{\scriptstyle\pm 0.22}$ \\
\bottomrule
\end{tabular}
\end{table}

The results can be found in Table~\ref{tab:performance_character}. At 10 dB SMNR, all differentiable methods achieve comparable performance, with DSR obtaining marginally better NMSE ($-15.21$ dB) than EROT ($-14.88$ dB) and Soft ($-14.69$ dB). The non-differentiable Hard baseline ($-14.65$ dB) is within one standard deviation of Soft, so at SMNR $10$ dB, gradient flow through resampling gives at most a small gain here ($0.56$ dB for DSR over Hard). However, we can observe a larger difference at $0$ dB SMNR. DSR achieves $-8.59$ dB NMSE, which is a $0.82$ dB improvement over EROT ($-7.77$ dB), which corresponds to a $17\%$ lower MSE.

\subsection{Training time comparison}

\begin{table}
\centering
\vspace{-\baselineskip}
\caption{Training time on Lorenz-63.}
\label{tab:training_time}
\small
\begin{tabular}{lcc}
\toprule
Method & Total (h) & Time/epoch (s) \\
\midrule
Hard & $\mathbf{2.49}{\scriptstyle\pm 0.15}$ & $\mathbf{17.94}{\scriptstyle\pm 1.11}$ \\
Soft & $4.33{\scriptstyle\pm 0.05}$ & $31.16{\scriptstyle\pm 0.39}$ \\
EROT & $5.96{\scriptstyle\pm 0.11}$ & $42.94{\scriptstyle\pm 0.77}$ \\
DSR  & $\underline{2.55}{\scriptstyle\pm 0.02}$ & $\underline{18.39}{\scriptstyle\pm 0.16}$ \\
\bottomrule
\end{tabular}
\end{table}
Table~\ref{tab:training_time} reports wall-clock training times on Lorenz-63, using NVIDIA T4 GPU. DSR adds only $2.4\%$ over Hard (2.55 h vs. 2.49 h), while EROT is more than 2× slower (5.96 h) because of its Sinkhorn iterations. Although DSR shares the $\mathcal{O}(N^2)$ transport matrix cost with EROT, it avoids the iterative solver: the soft assignment in Eq.~\eqref{eq:soft_assignment} is a single pass of sigmoid evaluations, implemented as batched matrix operations, hence the nested loops in Algorithm~\ref{alg:dsr} do not appear in practice. This brings DSR's per-resampling cost close to that of a discrete systematic resampling step.

\paragraph{The convex combination limitation.}
At any finite $\tau$, DSR produces convex combinations of ancestor particles rather than exact copies, which may place mass between modes when the posterior is multimodal; the same property applies to EROT~\cite{corenflos2021differentiable}. DSR's CDF-ordered structure partially mitigates this: Morton sorting makes soft assignments mix mostly nearby particles, which limits cross-mode mixing when the modes differ in the leading coordinates of the Morton key. We quantify this on a static bimodal Gaussian in Appendix~\ref{app:multimodal} (separating distributions along first dimension): without Morton sorting every resampled particle lands between the modes, whereas with Morton sorting the inter-modal mass drops to $\approx 0.28$ and remains stable up to dimension $10$, while EROT places most mass between the modes for $d \geq 3$. Nonetheless, multimodal posteriors remain challenging for particle filters in general~\cite{vermaak2023mixturepf,thrun2001robustmc}, and a fuller treatment of differentiable resampling under such posteriors is an interesting direction for future work.

\section{Conclusion}

In this paper we have introduced Differentiable Systematic Resampling (DSR), a principled and computationally efficient method for enabling gradient-based learning in particle filters. DSR is a soft relaxation of systematic resampling, where a parameter is introduced that controls the tradeoff between gradient flow and exactness of systematic resampling. Unlike existing differentiable resampling methods, DSR preserves the CDF-ordered, banded structure of systematic resampling while enabling full gradient flow through the particle filtering recursion. Our experimental results on both synthetic chaotic systems (Lorenz-63) and real-world handwriting dynamics (CharacterTrajectories) demonstrate that DSR achieves the best state estimation accuracy in every setting we tested and the best or near-best dynamics learning quality, with the largest gains at low signal-to-noise ratios.

\section{Acknowledgments}
The research is supported by funding and activities from \href{https://www.digitalfutures.kth.se/}{Digital Futures Center}, \href{https://defence-industry-space.ec.europa.eu/system/files/2023-06/REACTII-Factsheet_EDF22.pdf}{European Defence Fund REACT II} project, \href{https://www.saab.com/}{Saab}, \href{https://www.ericsson.com/en}{Ericsson}, \href{https://www.vinnova.se/}{VINNOVA} and partially supported by the Wallenberg AI, Autonomous Systems and
Software Program (\href{https://wasp-sweden.org/}{WASP}). Computational resources were provided by the National Academic Infrastructure for Supercomputing in Sweden (NAISS), funded by the Swedish Research Council.



\bibliographystyle{plain}  
\small
\bibliography{Reference}
\normalsize

\newpage
\appendix

\section{Differentiable Systematic Resampling Algorithm}\label{sec:dsr_algorithm}

In this section, we explain the DSR algorithm. Note that \texttt{MortonSort} is used by default, but is typically optional when the filtering posterior is unimodal, as particle ordering has less effect on the soft assignments in this regime (see Appendix~\ref{app:sorting_ablation}). Also note that DSR in Alg.~\ref{alg:dsr} can be implemented using batched matrix operations, hence avoiding explicit loops over particles. 

\begin{algorithm}[H]
\caption{\texttt{MortonSort}: Morton Space-Filling Curve Sort.}
\label{alg:morton_sort}
\begin{algorithmic}[1]
\REQUIRE Particles $\{\mathbf{x}_i\}_{i=1}^N$ with $\mathbf{x}_i \in \mathbb{R}^{d_x}$, resolution $b$ bits
\FOR{$i = 1$ to $N$}
    \FOR{$d = 1$ to $d_x$}
        \STATE $c_i^{(d)} = \left\lfloor \frac{x_i^{(d)} - x_{\min}^{(d)}}{x_{\max}^{(d)} - x_{\min}^{(d)}} \cdot (2^b - 1) \right\rfloor$
        \hfill $\triangleright$ normalize to integer grid
    \ENDFOR
    \STATE Write each $c_i^{(d)}$ in binary: $c_i^{(d)} = \sum_{k=0}^{b-1} c_{i,k}^{(d)} \, 2^k$ with $c_{i,k}^{(d)} \in \{0,1\}$
    \STATE $z_i = \sum_{k=0}^{b-1} \sum_{d=1}^{d_x} c_{i,k}^{(d)} \cdot 2^{k \cdot d_x + (d_x-d)}$
    \hfill $\triangleright$ interleave bits across dimensions
\ENDFOR
\STATE $\pi = \mathrm{argsort}(z_1, \ldots, z_N)$
\RETURN $\{\mathbf{x}_{\pi(i)}\}_{i=1}^N$, $\pi$
\end{algorithmic}
\end{algorithm}

\begin{algorithm}[H]
\caption{Differentiable Systematic Resampling (DSR). Equation marked in \textcolor{green!50!black}{green} is the DSR assignment step, and equation marked in \textcolor{red!50!black}{red} is the SR assignment step.}
\label{alg:dsr}
\begin{algorithmic}[1]
\REQUIRE Weights $\{w_i\}_{i=1}^N$, particles $\{\mathbf{x}_i\}_{i=1}^N$, temperature $\tau > 0$, resolution $b$
\STATE $\{\mathbf{x}_i\}_{i=1}^N, \pi \leftarrow \texttt{MortonSort}(\{\mathbf{x}_i\}_{i=1}^N, b)$; \quad reorder $\{w_i\}$ accordingly
\STATE Compute CDF: $F_0 = 0$, \; $F_i = \sum_{k=1}^i w_k$ for $i = 1, \ldots, N$
\STATE Draw $u_0 \sim \mathrm{Uniform}(0, \tfrac{1}{N})$
\FOR{$j = 1$ to $N$}
    \STATE $u_j = u_0 + \tfrac{j-1}{N}$
    \FOR{$i = 1$ to $N$}
        \STATE \textcolor{red!70!black}{$s_i = \mathds{1}(F_{i-1} < u_j \leq F_i)$}  \hfill $\triangleright$ systematic resampling
        \STATE \textcolor{green!50!black}{$s_i(u_j;\tau) =
        \sigma\!\left(\frac{u_j - F_{i-1}}{\tau}\right)
        -
        \sigma\!\left(\frac{u_j - F_i}{\tau}\right)$}
    \ENDFOR
    \STATE $T_{ij} = s_i(u_j;\tau) \big/ \sum_{k=1}^N s_k(u_j;\tau)$
    \STATE $\tilde{\mathbf{x}}_j = \sum_{i=1}^N T_{ij} \mathbf{x}_i$
\ENDFOR
\RETURN $\{\tilde{\mathbf{x}}_j\}_{j=1}^N$
\end{algorithmic}
\end{algorithm}

\section{Proofs of propositions}\label{sec:proof_of_props}

\begin{proof}[Proof of Proposition 4.1]
We prove both parts of the proposition.

\textbf{Part 1: Convergence of the soft transport plan.}

Fix a column index $j \in [N]$ and define the hard systematic resampling index
\[
a^*_j := \min\{i \in [N] : F_i \ge u_j\}.
\]
By construction, we have
\[
F_{a^*_j-1} < u_j \le F_{a^*_j}.
\]
Since we can view $u_j$ as a realization from a continuous distribution, the event $u_j = F_i$ occurs with probability zero and is ignored in the almost sure sense.

Recall that
\[
s_i(u_j;\tau)
= \sigma\!\left(\frac{u_j - F_{i-1}}{\tau}\right)
- \sigma\!\left(\frac{u_j - F_i}{\tau}\right),
\]
where $\sigma(x) = (1+e^{-x})^{-1}$.

We analyze the limit as $\tau \to 0^+$.

\emph{Case 1: $i = a^*_j$.}
Since $u_j > F_{a^*_j-1}$ and $u_j < F_{a^*_j}$ almost surely, we have
\[
\frac{u_j - F_{a^*_j-1}}{\tau} \to +\infty,
\qquad
\frac{u_j - F_{a^*_j}}{\tau} \to -\infty.
\]
Therefore,
\[
\lim_{\tau\to0^+} s_{a^*_j}(u_j;\tau)
= 1 - 0 = 1
\quad \text{a.s.}
\]

\emph{Case 2: $i < a^*_j$.}
Then $F_i < u_j$ and $F_{i-1} < u_j$, implying both arguments of $\sigma$ tend to $+\infty$ as $\tau\to0^+$.
Hence,
\[
\lim_{\tau\to0^+} s_i(u_j;\tau) = 1 - 1 = 0.
\]

\emph{Case 3: $i > a^*_j$.}
Then $F_{i-1} \ge F_{a^*_j} > u_j$, so both arguments tend to $-\infty$, yielding
\[
\lim_{\tau\to0^+} s_i(u_j;\tau) = 0 - 0 = 0.
\]

Combining all cases, we obtain
\[
\lim_{\tau\to0^+} s_i(u_j;\tau)
=
\begin{cases}
1, & i = a^*_j, \\
0, & i \neq a^*_j,
\end{cases}
\quad \text{a.s.}
\]

Since $\sum_{k=1}^N s_k(u_j;\tau) \to 1$ almost surely, it follows that
\[
\lim_{\tau\to0^+} T^{(\tau)}_{ij}
=
\lim_{\tau\to0^+}
\frac{s_i(u_j;\tau)}{\sum_{k=1}^N s_k(u_j;\tau)}
=
\mathds{1}(i = a^*_j)
\quad \text{a.s.}
\]

\textbf{Part 2: Convergence of the resampled particles.}

Using the result above,
\begin{align*}
\lim_{\tau\to0^+} \tilde{\mathbf{x}}^{(\tau)}_j
&=
\lim_{\tau\to0^+} \sum_{i=1}^N T^{(\tau)}_{ij} \mathbf{x}_i \\
&=
\sum_{i=1}^N \left(\lim_{\tau\to0^+} T^{(\tau)}_{ij}\right) \mathbf{x}_i \\
&=
\mathbf{x}_{a^*_j} \\
&= \tilde{\mathbf{x}}^*_j,
\end{align*}
where the exchange of limit and summation is justified since the sum is finite and $T^{(\tau)}_{ij} \in [0,1]$ for all $\tau>0$.
This completes the proof.
\end{proof}

\begin{proof}[Proof of Proposition 4.2]
Recall that
\[
\tilde{\mathbf{x}}^{(\tau)}_j
=
\sum_{i=1}^N T^{(\tau)}_{ij} \mathbf{x}_i,
\qquad
T^{(\tau)}_{ij}
=
\frac{s_i(u_j;\tau)}{\sum_{k=1}^N s_k(u_j;\tau)}.
\]
Let
\[
\Delta_j := \min\{u_j - F_{a^*_j-1},\, F_{a^*_j} - u_j\} > 0
\quad \text{a.s.}
\]

Since $\sum_{i=1}^N T_{ij}^{(\tau)}=1$, we have $\mathbf{x}_{a^*_j}=\sum_{i=1}^N T_{ij}^{(\tau)}\mathbf{x}_{a^*_j}$. Thus
\begin{align*}
\tilde{\mathbf{x}}^{(\tau)}_j - \mathbf{x}_{a^*_j}
&= \sum_{i=1}^N T_{ij}^{(\tau)}\mathbf{x}_i - \sum_{i=1}^N T_{ij}^{(\tau)}\mathbf{x}_{a^*_j} =
\sum_{i=1}^N T^{(\tau)}_{ij} (\mathbf{x}_i - \mathbf{x}_{a^*_j}).
\end{align*}
Taking norms and using the triangle inequality yields
\[
\|\tilde{\mathbf{x}}^{(\tau)}_j - \mathbf{x}_{a^*_j}\|_2
\le
\sum_{i=1}^N T^{(\tau)}_{ij} \|\mathbf{x}_i - \mathbf{x}_{a^*_j}\|_2
\le
D \sum_{i\neq a^*_j} T^{(\tau)}_{ij},
\]
where $D$ is the diameter, $\sup_{i,j}\|\mathbf{x}_i-\mathbf{x}_j\|_2 \le D$, and we have used the obvious equality $T_{a_j^*j}\|\mathbf{x}_{a_j^*} - \mathbf{x}_{a^*_j}\|_2=0$.

It therefore suffices to bound $\sum_{i\neq a^*_j} T^{(\tau)}_{ij}$.

\textbf{Lower bound on the dominant term.}
Using the inequalities $\sigma(x) \ge 1-e^{-x}$ for $x\ge0$ and $-\sigma(x) \ge -e^{x}$ for $x\le0$, we obtain
\[
s_{a^*_j}(u_j;\tau)
=
\sigma\!\left(\frac{u_j - F_{a^*_j-1}}{\tau}\right)
-
\sigma\!\left(\frac{u_j - F_{a^*_j}}{\tau}\right)
\ge
1 - e^{-\Delta_j/\tau} - e^{-\Delta_j/\tau}.
\]
Thus, for all $\tau \le \Delta_j / \log 4$,
\[
s_{a^*_j}(u_j;\tau) \ge \tfrac12.
\]

\textbf{Upper bound on the remaining terms.}
Fix $i \neq a^*_j$.
If $i < a^*_j$, then $F_i \le F_{a^*_j-1} < u_j$, so both arguments of the sigmoids
are positive and
\[
s_i(u_j;\tau)
=
\sigma\!\left(\frac{u_j - F_{i-1}}{\tau}\right)
-
\sigma\!\left(\frac{u_j - F_i}{\tau}\right)
\le
1 - \sigma\!\left(\frac{u_j - F_{a^*_j-1}}{\tau}\right).
\]
Using $1-\sigma(x)\le e^{-x}$ for $x\ge0$, we obtain
\[
s_i(u_j;\tau)
\le
\exp\!\left(-\frac{u_j - F_{a^*_j-1}}{\tau}\right).
\]

If $i > a^*_j$, then $F_{i-1} \ge F_{a^*_j} > u_j$, so both arguments are negative and
\[
s_i(u_j;\tau)
\le
\sigma\!\left(\frac{u_j - F_{a^*_j}}{\tau}\right)
\le
\exp\!\left(-\frac{F_{a^*_j} - u_j}{\tau}\right),
\]
where we used $\sigma(x)\le e^x$ for $x\le0$.

In both cases, we obtain the uniform bound
\[
s_i(u_j;\tau)
\le
\exp\!\left(-\frac{\Delta_j}{\tau}\right).
\]

Hence,
\[
\sum_{i\neq a^*_j} s_i(u_j;\tau)
\le
N \exp\left(-\frac{\Delta_j}{\tau}\right).
\]

As $s_{a_j^*}(u_j;\tau)\ge \frac{1}{2}$, it follows that $\sum_{k=1}^Ns_k(u_j;\tau)\ge \frac{1}{2}$. Combining the bounds yields
\[
\sum_{i\neq a^*_j} T^{(\tau)}_{ij}
=
\frac{\sum_{i\neq a^*_j} s_i(u_j;\tau)}{\sum_{k=1}^N s_k(u_j;\tau)}
\le \frac{N e^{-\Delta_j/\tau}}{1/2} =
2N \exp\left(-\frac{\Delta_j}{\tau}\right).
\]

Finally,
\[
\|\tilde{\mathbf{x}}^{(\tau)}_j - \mathbf{x}_{a^*_j}\|_2
\le
2DN \exp\!\left(-\frac{\Delta_j}{\tau}\right),
\]
which establishes exponential convergence as $\tau \to 0^+$.
\end{proof}

\section{Architecture and training configuration}\label{sec:training_arch_config}

\subsection{Network Architectures}

All experiments use multilayer perceptrons (MLPs) with ReLU activations to parameterize the transition and proposal models. Both models predict Gaussian distributions with diagonal covariance matrices.

\textbf{Transition Model.} The transition model $f_{\pmb\phi}(\mathbf{x}_t)$ predicts $p_{\pmb\phi}(\mathbf{x}_{t+1}|\mathbf{x}_t)$ and consists of:
\begin{itemize}
    \item Shared trunk: Two fully-connected layers with hidden dimension $h$ and ReLU activations
    \item Mean head: Linear layer mapping from hidden dimension to state dimension $d_x$
    \item Covariance head: Linear layer mapping from hidden dimension to $d_x$, followed by softplus activation to ensure positive diagonal entries
\end{itemize}

The model outputs $(\pmb\mu_{\pmb\phi}(\mathbf{x}_t), \pmb\Lambda_{\pmb\phi}(\mathbf{x}_t))$ where $\pmb\Lambda_{\pmb\phi}$ is a diagonal covariance matrix.

\textbf{Proposal Model.} For simplicity, the proposal model $q_{\pmb\theta}(\mathbf{x}_t|\mathbf{x}_{t-1}, \mathbf{y}_t)$ has the same architecture as the transition model, but takes concatenated inputs $[\mathbf{x}_{t-1}, \mathbf{y}_t]$. 

\subsection{Hyperparameters}

\textbf{Linear Gaussian SSM.} The transition model is parameterized as a learnable matrix $\hat{\mathbf{A}} \in \mathbb{R}^{4 \times 4}$ with a learnable diagonal covariance, and the proposal is a second, separately learned linear-Gaussian model of the same form. The true transition matrix is $\mathbf{A} = 0.9 \cdot \mathrm{blockdiag}(\mathbf{R}(\theta_1), \mathbf{R}(\theta_2))$ with $\theta_1 = 0.3$ and $\theta_2 = 0.5$, where $\mathbf{R}(\theta)$ is the $2 \times 2$ rotation matrix. The process noise covariance is $\mathbf{C}_e = 0.1 \cdot \mathbf{I}_4$ and observations are full-state with $\mathbf{H} = \mathbf{I}_4$. Training uses the Adam optimizer with learning rate $10^{-3}$, batch size $64$, weight decay $0$, and gradient clipping at norm $5.0$. We train with $N = 25$ particles for $500$ epochs on $1000$ trajectories of length $T = 100$ at SMNR $= 10$ dB. For DSR, we set $\tau = 0.1$. We clamp the log-likelihood values below by $-50$ and $-100$ for $10$ dB and $0$ dB SMNR, respectively. This avoids extreme instability during training; none of the methods converges without this. For DSR and EROT, the gradient is clipped elementwise to $[-1, 1]$ per resampling step. EROT uses Sinkhorn with $100$ iterations, with early stopping when the difference is less than $10^{-3}$

\textbf{Lorenz-63.} We use $h = 32$ for both models. The transition model has input dimension 3 (state dimension), while the proposal model has input dimension $3 + 64 = 67$ (state plus observation). Training uses the Adam optimizer with learning rate $5 \times 10^{-4}$, batch size $64$, weight decay $0$, and gradient clipping at norm $5.0$. We train with $N = 25$ particles for $500$ epochs on $1000$ trajectories of length $T = 100$. For DSR, we set $\tau = 0.1$ across all experiments, and use $20$ bits for the Morton curves.

\textbf{Character Typing.} We use a hidden dimension of $h = 32$. The augmented state dimension is $d_x = 9$, with direct observations of the last $3$ dimensions ($d_y = 3$). Training uses $25$ particles, learning rate $1 \times 10^{-3}$, and batch size $64$ on $526$ training sequences of length $T = 178$. All other hyperparameters match Lorenz-63.

\textbf{Evaluation.} We evaluate the state estimation performance using the same training setup except for the resampling method, which uses hard systematic resampling (as this resampling method is typically used in practice). We use $N = 1000$ particles, $T=1000$ (linear and Lorenz-63) or $10$ trajectories at $T=178$ (CharacterTrajectories), and report the values after the last epoch.

\section{SMNR and Evaluation Measures}\label{sec:evaluation_measures}

\subsection{SMNR}\label{app:SMNR}

The signal-to-measurement noise ratio is given by
\begin{equation}
    \text{SMNR} = 10 \log_{10}\!\left( \frac{\sum_{t=1}^{T} \mathbb{E}\!\left[\big\|\mathbf{h}(\mathbf{x}_t^{(j)}) - \mathbb{E}[\mathbf{h}(\mathbf{x}_t^{(j)})]\big\|_2^2\right]}{Td_y \sigma_w^2} \right),
\end{equation}
where $\mathbf{h}(\mathbf{x}_t^{(j)})$ is the noiseless observation, $\sigma_w^2$ is the observation noise variance per dimension, and $d_y$ is the observation dimension. A low SMNR corresponds to high measurement noise.

\subsection{State Estimation}

\textbf{Normalized Mean Squared Error (NMSE).} We evaluate state estimation accuracy using NMSE in dB:
\begin{equation}
\text{NMSE} = 10 \log_{10} \left( \frac{\frac{1}{T}\sum_{t=1}^{T} \|\mathbf{x}_t - \hat{\mathbf{x}}_t\|_2^2}{\frac{1}{T}\sum_{t=1}^{T} \|\mathbf{x}_t\|_2^2} \right),
\end{equation}
where $\mathbf{x}_t$ is the true state, $\hat{\mathbf{x}}_t$ is the estimated state (the posterior mean of the particle filter), and $T$ is the trajectory length. Lower values indicate better performance.

\subsection{Transition Model Quality}

\textbf{One-step Mean Squared Error (MSE).} Measures the quality of the learned dynamics $f_\theta$ by predicting one step ahead from true states:
\begin{equation}
\text{MSE} = \frac{1}{T-1}\sum_{t=1}^{T-1} \|\mathbf{x}_{t+1} - f_\theta(\mathbf{x}_t)\|_2^2.
\end{equation}
Lower values indicate better transition model quality.

\textbf{Multi-step Mean Squared Error (MSE-5-step).} Evaluates prediction over a longer horizon by rolling out the learned dynamics for 5 steps:
\begin{equation}
\text{MSE-5-step} = \frac{1}{T-5}\sum_{t=1}^{T-5} \|\mathbf{x}_{t+5} - f_\theta^{(5)}(\mathbf{x}_t)\|_2^2,
\end{equation}
where $f_\theta^{(5)}(\mathbf{x}_t) = f_\theta(f_\theta(f_\theta(f_\theta(f_\theta(\mathbf{x}_t)))))$ denotes $5$ iterative applications of the learned dynamics. For each use of the transition model, the mean is used as input to the subsequent call of the model.

\textbf{Log-Likelihood (LL).} Evaluates the probabilistic quality of the learned transition model:
\begin{equation}
\text{LL} = \frac{1}{T-1}\sum_{t=1}^{T-1} \log p_\theta(\mathbf{x}_{t+1} | \mathbf{x}_t),
\end{equation}
where $p_\theta(\mathbf{x}_{t+1} | \mathbf{x}_t) = \mathcal{N}(\mathbf{x}_{t+1}; \pmb\mu_\theta(\mathbf{x}_t), \pmb\Lambda_\theta(\mathbf{x}_t))$ is the learned transition distribution, where $f_{\pmb\theta}(\mathbf{x}_t)=(\pmb\mu_\theta(\mathbf{x}_t), \pmb\Lambda_\theta(\mathbf{x}_t))$. Higher values indicate better probabilistic modeling.

\textbf{KL Divergence (KLD).} Measures the distributional mismatch between the learned dynamics and true dynamics:
\begin{equation}
\text{KLD} = \frac{1}{T-1}\sum_{t=1}^{T-1} D_{\mathrm{KL}}\big(p_\phi(\mathbf{x}_{t+1} \mid \mathbf{x}_t) \,\Vert\, p_{\mathrm{true}}(\mathbf{x}_{t+1} \mid \mathbf{x}_t)\big).
\end{equation}
Lower values indicate a closer match to the true transition distribution.

\section{Empirical Analysis of the DSR Relaxation}\label{app:empirical_analysis}

The theoretical results in Section~\ref{sec:theory} establish that DSR converges to systematic resampling as $\tau \to 0^+$, with pointwise exponential rates. In this section, we complement the theory with an empirical study of the DSR transport plan and the resampled particles. We use a sequential importance resampling (SIR) particle filter with systematic resampling~\cite{tutorial_particle_filter_2011} on the Lorenz-63 system with oracle (true) dynamics, at $10$ dB SMNR with the Gaussian point spread observation map. This isolates the resampling relaxation from the dynamics learning, so we can study DSR as an approximation to systematic resampling directly. We vary $\tau \in \{0.01, 0.05, 0.1, 0.5, 1.0\}$ and $N \in \{25, 50, 100, 200\}$, and report results averaged over 10 sequences of length $T = 100$.

\subsection{Transport Plan Visualization}

Figure~\ref{fig:transport_maps_tau} shows the DSR transportation matrices $T^{(\tau)}$ for varying $\tau$, alongside hard systematic resampling and EROT ($\varepsilon = 0.5$), at a representative time step with $N = 25$ particles. At $\tau = 0.01$, the DSR transport plan closely resembles the sparse, block-diagonal structure of hard systematic resampling, with mass concentrated on a single ancestor per resampled particle. As $\tau$ grows, the plan becomes more diffuse and spreads mass across neighbouring ancestors. At $\tau = 0.5$, the plan becomes visibly banded: a high-weight ancestor contributes to many resampled particles, so DSR becomes less restrictive about spatial locality. This brings DSR qualitatively closer to EROT. A main difference is that at $\tau = 0.5$ DSR's columns are flatter: each resampled particle mixes more ancestors than under EROT ($N_{eff} = 11.2$ vs. $9.8$ for $N = 25$, Appendix \ref{app:effective_ancestors}), which makes DSR less local. A lower column-wise spread implies that the resampling linear combination has more even weights, which in turn becomes less restrictive to spatial locality. On the other hand, EROT's transport plan bears little resemblance to systematic resampling overall, whereas DSR retains the CDF-ordered banded structure even at high $\tau$.

\begin{figure}[t]
\centering
\includegraphics[width=\textwidth]{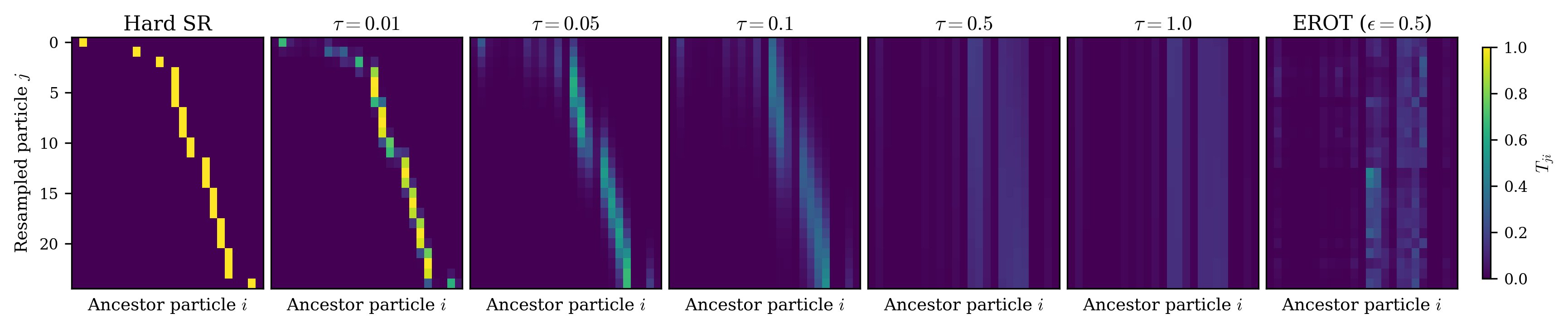}
\caption{Transportation maps for hard systematic resampling and DSR at varying temperatures $\tau \in \{0.01, 0.05, 0.1, 0.5, 1.0\}$, alongside EROT ($\varepsilon = 0.5$), for a SIR particle filter of the Lorenz-63 system with $N = 25$ particles. DSR is shown without Morton sorting (the filter's particle order).}
\label{fig:transport_maps_tau}
\end{figure}

\subsection{Effective Number of Ancestors}\label{app:effective_ancestors}

To quantify the sharpness of the transport plan, we compute the effective number of ancestors for each resampled particle $j$:
\[
N_{\mathrm{eff},j} = \exp\!\left(-\sum_{i=1}^N T_{ij} \log T_{ij}\right),
\]
averaged over particles and time steps. For hard systematic resampling, $N_{\mathrm{eff},j} = 1$, as each resampled particle copies exactly one ancestor.

Figure~\ref{fig:eff_ancestors} shows $N_{\mathrm{eff}}$ as a function of $\tau$ for different particle counts. At low temperatures, the DSR transport plan is sharply concentrated: $N_{\mathrm{eff}} = 1.70$ at $\tau = 0.01$ with $N = 25$, close to the hard SR limit. $N_{eff}$ grows with $\tau$ up to $\tau = 0.5$ and then levels off ($N = 25$: $11.2$ at $\tau = 0.5$, $10.6$ at $\tau = 1.0$); for fixed $\tau$ it scales with $N$. This is consistent with the $N$-dependent factor in Proposition~\ref{prop:exponential_convergence}: as $N$ grows, the CDF intervals $[F_{i-1}, F_i]$ become narrower, so more ancestors fall within the sigmoid bandwidth $\tau$ and contribute to each resampled particle. 

For comparison, EROT ($\varepsilon = 0.5$) yields $N_{\mathrm{eff}}$ of $9.83$, $18.36$, $36.41$, and $71.77$ for $N = 25, 50, 100, 200$. For DSR using the default value $\tau = 0.1$, DSR is substantially sparser than EROT across all particle counts (e.g., $6.54$ vs.\ $9.83$ for $N = 25$; $23.11$ vs.\ $36.41$ for $N = 100$).




\subsection{L2 Distance and Effect of Morton Sorting}

We measure the per-particle L2 distance between DSR and hard SR resampled particles:
\[
\mathrm{L2}_{\mathrm{particles}} = \frac{1}{N}\sum_{j=1}^N \|\tilde{\mathbf{x}}^{(\tau)}_j - \mathbf{x}_{a_j^*}\|_2,
\]
where $\mathbf{x}_{a_j^*}$ is the resampled particle by systematic resampling. This directly relates to the bound in Proposition~\ref{prop:exponential_convergence}. Further, to assess the role of particle ordering, we compare Morton space-filling curve sorting (used in all VSMC experiments) against the unsorted order the filter produces (inherited from the previous resampling step).

Figure~\ref{fig:l2_distance} shows the L2 distance as a function of $\tau$ for both orderings. As suggested in Proposition~\ref{prop:exponential_convergence}, the distance grows with $\tau$ under Morton sorting: for $N = 25$, it increases from $0.10$ at $\tau = 0.01$ to $0.51$ at $\tau = 0.5$. Morton sorting consistently reduces the L2 distance relative to the unsorted order, and the gap is largest at low $\tau$ and high $N$. At $\tau = 0.01$ with $N = 200$, Morton sorting reduces the distance from $0.38$ to $0.25$, a $35\%$ reduction. The reason is that at low $\tau$, each resampled particle draws mass primarily from its CDF neighbours, and Morton sorting ensures those neighbours are also spatially close, keeping the mixture near the hard ancestor. At higher temperatures ($\tau \geq 0.5$), the plan is diffuse enough that ordering matters less.

\begin{figure}[t]
\centering
\begin{subfigure}[t]{0.48\textwidth}
    \centering
    \includegraphics[width=\textwidth]{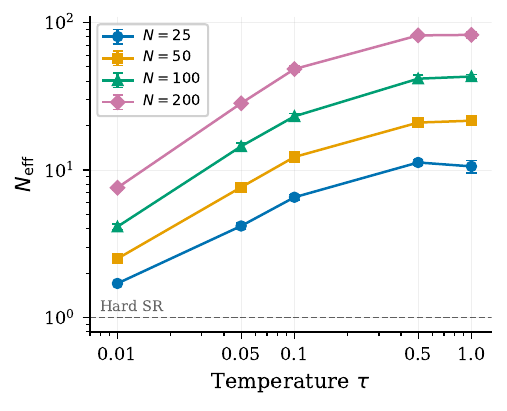}
    \caption{Effective number of ancestors $N_{\mathrm{eff}}$ as a function of temperature $\tau$ for varying particle counts $N$. The dashed line indicates hard systematic resampling ($N_{\mathrm{eff}} = 1$).}
    \label{fig:eff_ancestors}
\end{subfigure}
\hfill
\begin{subfigure}[t]{0.48\textwidth}
    \centering
    \includegraphics[width=\textwidth]{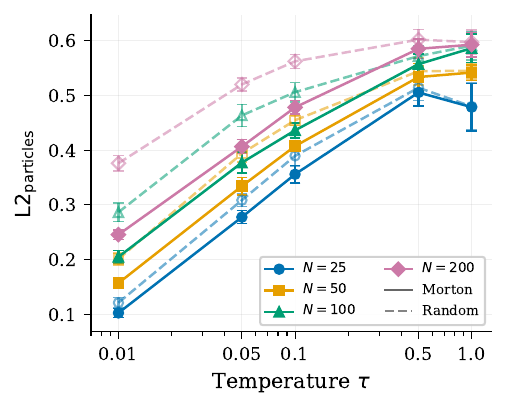}
    \caption{Per-particle L2 distance between DSR and hard SR resampled particles. Solid lines use Morton sorting; dashed lines use the filter's unsorted particle order.}
    \label{fig:l2_distance}
\end{subfigure}
\caption{Empirical analysis of the DSR relaxation on the Lorenz-63 system with oracle dynamics at SMNR $= 10$~dB. Error bars show $\pm 1$ standard deviation over 10 sequences.}
\label{fig:dsr_empirical}
\end{figure}

\subsection{Bias and Variance of DSR Estimates}\label{app:dsr_bias_var} Here we study the bias and variance of DSR. For resampling schemes, a typical study is the study of how $I = \sum_i w_i f(\mathbf{x}_i)$ ($f$ is an arbitrary function) changes after resampling, in particular, studying how $I$ behaves after resampling. To quantify the effect, we measure the bias, variance, and MSE of a test-function $f$ estimate under each scheme. Fixing a particle set and weights, over $20000$ seeds we compute $\hat{I} = \sum_j \tilde{w}_j f(\tilde{\mathbf{x}}_j)$ with equal weights $\tilde{w}_j = 1/N$ after resampling, and compare against the weighted estimate $I = \sum_i w_i f(\mathbf{x}_i)$. We use $N = 100$ particles with $\mathbf{x}_j \sim \mathcal{N}(\mathbf{0}, \mathbf{I}_3)$ and $w_j$ the likelihood, and two test functions, $f(\mathbf{x}) = \mathbf{x}$ and $f(\mathbf{x}) = \mathbf{x}^2$ (elementwise). Hard systematic and multinomial resampling are unbiased, so their reported bias is Monte-Carlo error. Results are in Table~\ref{tab:dsr_bias_var}.

\begin{table}[ht]
\centering
\caption{Bias, variance, and MSE of resampling estimates against the weighted estimate $I$, over $20000$ seeds with $N = 100$. Hard SR and multinomial are unbiased (their bias is Monte-Carlo error).}
\label{tab:dsr_bias_var}
\setlength{\tabcolsep}{4pt}
\small
\begin{tabular}{lcccccc}
\toprule
& \multicolumn{3}{c}{$f(\mathbf{x}) = \mathbf{x}$} & \multicolumn{3}{c}{$f(\mathbf{x}) = \mathbf{x}^2$} \\
\cmidrule(lr){2-4} \cmidrule(lr){5-7}
Method & bias & var & MSE & bias & var & MSE \\
\midrule
Hard SR & $2.3 \times 10^{-4}$ & $1.2 \times 10^{-3}$ & $1.2 \times 10^{-3}$ & $1.2 \times 10^{-4}$ & $7.0 \times 10^{-4}$ & $7.0 \times 10^{-4}$ \\
Multinomial & $4.9 \times 10^{-4}$ & $5.5 \times 10^{-3}$ & $5.5 \times 10^{-3}$ & $2.8 \times 10^{-4}$ & $2.6 \times 10^{-3}$ & $2.6 \times 10^{-3}$ \\
DSR ($\tau = 0.01$) & $2.4 \times 10^{-3}$ & $2.9 \times 10^{-5}$ & $3.4 \times 10^{-5}$ & $1.2 \times 10^{-1}$ & $3.8 \times 10^{-7}$ & $1.4 \times 10^{-2}$ \\
DSR ($\tau = 0.05$) & $1.2 \times 10^{-2}$ & $8.6 \times 10^{-6}$ & $1.6 \times 10^{-4}$ & $2.3 \times 10^{-1}$ & $3.1 \times 10^{-7}$ & $5.3 \times 10^{-2}$ \\
DSR ($\tau = 0.1$) & $2.0 \times 10^{-2}$ & $4.7 \times 10^{-6}$ & $3.9 \times 10^{-4}$ & $2.7 \times 10^{-1}$ & $2.0 \times 10^{-7}$ & $7.3 \times 10^{-2}$ \\
DSR ($\tau = 0.5$) & $1.2 \times 10^{-2}$ & $1.6 \times 10^{-7}$ & $1.4 \times 10^{-4}$ & $3.1 \times 10^{-1}$ & $2.2 \times 10^{-9}$ & $9.9 \times 10^{-2}$ \\
DSR ($\tau = 1.0$) & $3.6 \times 10^{-3}$ & $1.3 \times 10^{-8}$ & $1.3 \times 10^{-5}$ & $3.2 \times 10^{-1}$ & $2.0 \times 10^{-10}$ & $1.0 \times 10^{-1}$ \\
\bottomrule
\end{tabular}
\end{table}

For the linear function $f(\mathbf{x}) = \mathbf{x}$, DSR trades variance for bias at every $\tau$: the variance is $1.6$ to $5$ orders of magnitude below hard systematic resampling, since a convex combination is an average, and DSR attains lower MSE at every $\tau$. Here the bias is non-monotone in $\tau$, because both limits are favourable for a linear function: $\tau \to 0^+$ recovers systematic resampling, while large $\tau$ drives every output toward the weighted mean, which estimates a linear function exactly (here $I \approx 0$, so the contraction at $\tau = 1.0$ is favourable and yields the lowest MSE). This should not be read as $\tau = 1.0$ being preferable in general, since the same contraction degrades nonlinear functions: for $f(\mathbf{x}) = \mathbf{x}^2$ the bias increases monotonically with $\tau$. The variance reduction comes from averaging: every output is a convex combination of ancestors.

\subsection{Discussion}

The empirical analysis supports the theoretical predictions of Section~\ref{sec:theory}. The DSR transport plan converges to the sparse structure of systematic resampling as $\tau \to 0^+$, with $N_{\mathrm{eff}} \to 1$ at low temperatures. At the default value $\tau = 0.1$, DSR is substantially sparser than EROT ($N_{eff}$ $6.5$ vs. $9.8$ for $N = 25$) and puts on average $35\%$ of each resampled particle's weight on its hard SR ancestor ($N = 25$). Morton sorting reduces the approximation error, particularly at low $\tau$ and high $N$, confirming that locality-preserving orderings complement the DSR relaxation. Together, these results show that DSR approximates systematic resampling closely at small $\tau$, and how the approximation loosens as $\tau$ and $N$ grow.

\section{Ablation Studies}
\label{app:ablation}

\subsection{Dependence on $\tau$}
\label{app:tau_ablation}

We study the effect of the temperature parameter $\tau$ on filtering performance 
in the Lorenz-63 dataset at SMNR = 10~dB. Table~\ref{tab:ablation} reports NMSE 
results for different temperature values.

\begin{table}[ht]
\centering
\caption{Ablation study of the temperature parameter $\tau$ on Lorenz-63 
(SMNR = 10~dB).}
\label{tab:ablation}
\begin{tabular}{cc}
\toprule
Temperature $\tau$ & NMSE (dB) \\
\midrule
0.01 & $-28.31 \pm 0.15$ \\
0.05 & $-29.56 \pm 0.34$ \\
0.10 & $\underline{-29.94} \pm 0.28$ \\
0.50 & $\mathbf{-30.21 \pm 0.29}$ \\
1.00 & $-29.91 \pm 0.28$ \\
\bottomrule
\end{tabular}
\end{table}

At $\tau = 0.01$, NMSE is notably worse ($-28.31$ dB) than at larger temperatures. This is consistent with the $\mathcal{O}(1/\tau)$ scaling of the sigmoid derivative,
\[
\frac{d}{dx}\sigma(x) = \frac{1}{\tau}\sigma(x/\tau)\bigl(1-\sigma(x/\tau)\bigr),
\]
which causes gradient magnitudes to explode during backpropagation at small $\tau$, leading to unstable optimization. As $\tau$ grows, gradient magnitudes become more moderate and training stabilizes. Performance plateaus for $\tau \in [0.05, 1.0]$, with NMSE varying by less than $0.7$ dB across this range and consistently low run-to-run variance. Based on these results, we fix $\tau = 0.1$ for all experiments; this is a conservative choice within the stable range that keeps the soft transport plan close to hard systematic resampling (see Appendix~\ref{app:empirical_analysis}).

Beyond NMSE, the temperature $\tau$ governs a bias--variance tradeoff in the gradient itself. Table~\ref{tab:tau_gradient} reports, for the same setup, the gradient-norm statistics and the per-particle $L_2$ distance of the DSR transport to hard systematic resampling (Appendix~\ref{app:empirical_analysis}) as a function of $\tau$. The two quantities move in opposite directions: as $\tau$ shrinks, the approximation error $L_2$ decreases ($0.505 \to 0.102$) while the median gradient norm and within-epoch standard deviation increase. The ratio of median within-epoch std to median gradient norm stays close to $0.45$ for $\tau \ge 0.05$, so gradients grow but remain comparably well behaved; at $\tau = 0.01$ the ratio rises to $0.55$, indicating larger variance relative to norm and less stable training, consistent with the degraded NMSE at $\tau = 0.01$. We read $\tau = 0.01$ as a change of regime rather than of scale.

\begin{table}[ht]
\centering
\caption{Gradient behaviour and approximation error as a function of $\tau$ on Lorenz-63 (SMNR $= 10$~dB). The $L_2$ column is the per-particle distance of the DSR transport to hard systematic resampling from the same particle set and offset (Appendix~\ref{app:empirical_analysis}).}
\label{tab:tau_gradient}
\setlength{\tabcolsep}{4pt}
\small
\begin{tabular}{lcccccc}
\toprule
& & \multicolumn{2}{c}{Gradient norm} & \multicolumn{2}{c}{Within-epoch std} & \\
\cmidrule(lr){3-4} \cmidrule(lr){5-6}
$\tau$ & NMSE (dB) & Median & Mean & Median & Mean & $L_2$ to hard SR \\
\midrule
$0.01$ & $-28.31{\scriptstyle\pm 0.15}$ & $14842$ & $2.39 \times 10^{5}$ & $8163$ & $4.35 \times 10^{5}$ & $0.102{\scriptstyle\pm 0.009}$ \\
$0.05$ & $-29.56{\scriptstyle\pm 0.34}$ & $6457$ & $7228$ & $3009$ & $3676$ & $0.277{\scriptstyle\pm 0.012}$ \\
$0.10$ & $-29.94{\scriptstyle\pm 0.28}$ & $4174$ & $4504$ & $1885$ & $2720$ & $0.356{\scriptstyle\pm 0.017}$ \\
$0.50$ & $\mathbf{-30.21}{\scriptstyle\pm 0.29}$ & $2546$ & $2951$ & $1155$ & $1985$ & $0.505{\scriptstyle\pm 0.026}$ \\
$1.00$ & $-29.91{\scriptstyle\pm 0.28}$ & $2306$ & $3041$ & $1027$ & $3453$ & $0.479{\scriptstyle\pm 0.046}$ \\
\bottomrule
\end{tabular}
\end{table}

\subsection{Annealing $\tau$ during training}\label{app:annealing}
The temperature is fixed at $\tau = 0.1$ in all main experiments. Here we test annealing $\tau$ over the course of training: starting at a larger $\tau$ for smoother gradients and decreasing it toward a sharper, more faithful assignment. We anneal $\tau$ linearly from $1.0$ to $0.1$ over $300$ epochs (training is done for $500$ epochs), and evaluate on Lorenz-63 at SMNR $= 10$~dB under the camera measurement map, with all other parameters at default. Table~\ref{tab:annealing} compares the annealed schedule against the fixed $\tau = 0.1$ setting.

\begin{table}[ht]
\centering
\caption{Fixed versus annealed $\tau$ on Lorenz-63 (SMNR $= 10$~dB). Results show mean $\pm$ std over 10 runs.}
\label{tab:annealing}
\small
\begin{tabular}{lccccc}
\toprule
& State Est. & \multicolumn{4}{c}{Transition Model Quality} \\
\cmidrule(lr){2-2} \cmidrule(lr){3-6}
Setting & NMSE (dB) & MSE & MSE-5-step & LL & KLD \\
\midrule
Fixed ($\tau = 0.1$) & $-29.94{\scriptstyle\pm 0.28}$ & $0.47{\scriptstyle\pm 0.24}$ & $3.38{\scriptstyle\pm 2.11}$ & $-1.98{\scriptstyle\pm 0.27}$ & $3.60{\scriptstyle\pm 1.04}$ \\
Annealed & $-30.33{\scriptstyle\pm 0.10}$ & $0.40{\scriptstyle\pm 0.11}$ & $3.01{\scriptstyle\pm 0.63}$ & $-1.86{\scriptstyle\pm 0.13}$ & $3.15{\scriptstyle\pm 0.32}$ \\
\bottomrule
\end{tabular}
\end{table}

Annealing gives a small but consistent improvement over the fixed setting across all metrics, and notably reduces run-to-run variance (e.g.\ NMSE std $0.10$ versus $0.28$ for fixed $\tau$). This indicates that annealing can benefit learning; we leave a systematic study of annealing schedules for future work.

\subsection{Dependence on $N$}\label{app:N_ablation}

We study the effect of the number of particles $N$ on filtering and dynamics learning performance for all resampling methods on the Lorenz-63 system at SMNR $= 10$~dB using the Gaussian point spread observation map. All other hyperparameters are kept fixed. Results are shown in Table~\ref{tab:n_ablation}.

\begin{table}
\centering
\caption{Effect of particle count $N$ on resampling methods for Lorenz-63 at SMNR $= 10$~dB. Results show mean $\pm$ std over 10 runs. OOM indicates that training exceeded available GPU memory.}
\label{tab:n_ablation}
\begin{tabular}{clccccc}
\toprule
& & State Est. & \multicolumn{4}{c}{Transition Model Quality} \\
\cmidrule(lr){3-3} \cmidrule(lr){4-7}
$N$ & Method & NMSE (dB) & MSE & MSE-5-step & LL & KLD \\
\midrule
25  & Hard & $\underline{-28.86}{\scriptstyle\pm 0.41}$ & $\underline{0.55}{\scriptstyle\pm 0.19}$ & $5.50{\scriptstyle\pm 3.68}$ & $\underline{-2.26}{\scriptstyle\pm 0.25}$ & $\underline{5.41}{\scriptstyle\pm 1.47}$ \\
    & Soft & $-27.82{\scriptstyle\pm 0.45}$ & $0.86{\scriptstyle\pm 0.49}$ & $6.45{\scriptstyle\pm 3.92}$ & $-2.55{\scriptstyle\pm 0.48}$ & $6.90{\scriptstyle\pm 2.43}$ \\
    & EROT & $-28.76{\scriptstyle\pm 0.31}$ & $0.56{\scriptstyle\pm 0.24}$ & $\underline{3.39}{\scriptstyle\pm 2.59}$ & $-2.32{\scriptstyle\pm 0.20}$ & $5.74{\scriptstyle\pm 1.03}$ \\
    & DSR  & $\mathbf{-29.94}{\scriptstyle\pm 0.28}$ & $\mathbf{0.47}{\scriptstyle\pm 0.24}$ & $\mathbf{3.38}{\scriptstyle\pm 2.11}$ & $\mathbf{-1.98}{\scriptstyle\pm 0.27}$ & $\mathbf{3.60}{\scriptstyle\pm 1.04}$ \\
\midrule
50  & Hard & $-28.30{\scriptstyle\pm 0.88}$ & $\mathbf{0.39}{\scriptstyle\pm 0.08}$ & $\mathbf{1.73}{\scriptstyle\pm 0.61}$ & $\underline{-1.98}{\scriptstyle\pm 0.09}$ & $\underline{4.30}{\scriptstyle\pm 0.66}$ \\
    & Soft & $-28.30{\scriptstyle\pm 0.97}$ & $0.59{\scriptstyle\pm 0.35}$ & $5.78{\scriptstyle\pm 3.19}$ & $-2.33{\scriptstyle\pm 0.45}$ & $6.27{\scriptstyle\pm 2.71}$ \\
    & EROT & $-27.91{\scriptstyle\pm 0.39}$ & $0.59{\scriptstyle\pm 0.21}$ & $5.22{\scriptstyle\pm 2.95}$ & $-2.49{\scriptstyle\pm 0.21}$ & $6.88{\scriptstyle\pm 1.44}$ \\
    & DSR  & $\mathbf{-30.34}{\scriptstyle\pm 0.28}$ & $\underline{0.41}{\scriptstyle\pm 0.13}$ & $\underline{1.95}{\scriptstyle\pm 1.04}$ & $\mathbf{-1.77}{\scriptstyle\pm 0.15}$ & $\mathbf{2.86}{\scriptstyle\pm 0.67}$ \\
\midrule
100 & Hard & $-28.34{\scriptstyle\pm 0.48}$ & $0.79{\scriptstyle\pm 0.21}$ & $4.65{\scriptstyle\pm 1.52}$ & $-2.61{\scriptstyle\pm 0.30}$ & $7.20{\scriptstyle\pm 3.42}$ \\
    & Soft & $-28.93{\scriptstyle\pm 0.30}$ & $0.50{\scriptstyle\pm 0.24}$ & $3.35{\scriptstyle\pm 1.74}$ & $-2.00{\scriptstyle\pm 0.30}$ & $3.61{\scriptstyle\pm 0.75}$ \\
    & EROT & $-27.55{\scriptstyle\pm 0.92}$ & $\mathbf{0.39}{\scriptstyle\pm 0.19}$ & $\mathbf{1.86}{\scriptstyle\pm 1.20}$ & $-2.03{\scriptstyle\pm 0.25}$ & $4.29{\scriptstyle\pm 0.96}$ \\
    & DSR  & $\mathbf{-30.42}{\scriptstyle\pm 0.20}$ & $\underline{0.44}{\scriptstyle\pm 0.19}$ & $\underline{2.38}{\scriptstyle\pm 0.83}$ & $\mathbf{-1.80}{\scriptstyle\pm 0.28}$ & $\mathbf{2.82}{\scriptstyle\pm 0.52}$ \\
\midrule
200 & Hard & $\underline{-20.42}{\scriptstyle\pm 7.29}$ & $0.75{\scriptstyle\pm 0.32}$ & $8.92{\scriptstyle\pm 4.96}$ & $\underline{-2.54}{\scriptstyle\pm 0.30}$ & $\underline{8.27}{\scriptstyle\pm 2.74}$ \\
    & Soft & $-21.52{\scriptstyle\pm 12.20}$ & $\underline{0.66}{\scriptstyle\pm 0.43}$ & $\underline{5.39}{\scriptstyle\pm 4.65}$ & $\underline{-2.54}{\scriptstyle\pm 0.66}$ & $9.62{\scriptstyle\pm 9.44}$ \\
    & EROT & OOM & OOM & OOM & OOM & OOM \\
    & DSR  & $\mathbf{-30.35}{\scriptstyle\pm 0.22}$ & $\mathbf{0.37}{\scriptstyle\pm 0.17}$ & $\mathbf{3.27}{\scriptstyle\pm 2.23}$ & $\mathbf{-1.73}{\scriptstyle\pm 0.23}$ & $\mathbf{2.52}{\scriptstyle\pm 0.41}$ \\
\bottomrule
\end{tabular}
\end{table}

DSR is the only method that scales gracefully across all tested particle counts. Its NMSE improves from $-29.94$~dB at $N = 25$ to around $-30.4$~dB from $N = 50$ onwards and plateaus there, with consistently low variance ($\leq 0.28$~dB std) across runs. The transition model metrics show the same trend: MSE, LL, and KLD all improve with $N$ and reach their best values at $N = 200$ (except MSE-5-step, which generally has a high variance).

By comparison, the other methods either fail to train stably or run out of memory. Hard and Soft are competitive at $N \leq 100$, but at $N = 200$ their performance collapses, with NMSE degrading to $-20.42$ and $-21.52$~dB and standard deviations above $7$ and $12$~dB across runs. These large variances are the result of training instability; there is a large variance across runs; for example, one run for Soft achieved an NMSE of $0.23$ dB. A likely cause is that, at large $N$, the variance of the biased estimator $\nabla \mathcal{L} \approx g_{\mathrm{rep}}$ (used by Hard) grows because more ancestor decisions are dropped from the gradient graph, and Soft's importance-weight correction becomes numerically less stable as the uniform component of its sampling distribution $(1-\alpha)/N$ shrinks. EROT does not complete at $N = 200$ because the $\mathcal{O}(N^2)$ transport matrix, combined with $100$ Sinkhorn iterations per resampling step and the autodiff tape over $T = 100$ time steps, exceeds the available GPU memory.

These results illustrate two distinct scaling properties of DSR. First, it provides a stable gradient signal across particle counts, whereas Hard and Soft degrade. Second, it needs about a quarter of EROT's memory per resampling step (no Sinkhorn iterations, float32), so it still fits at $N = 200$, whereas EROT does not. The relaxation parameter ($\tau$) of DSR is kept fixed at $0.1$ across all $N$.

\subsection{Dependence on spatial sorting of CDF}\label{app:sorting_ablation}

We now study the effect of Morton sorting on VSMC performance. As discussed, DSR operates on the empirical CDF, and the particle ordering determines which ancestors interact through the soft transport plan. Morton space-filling curve sorting places CDF neighbors close in state space, producing a transport plan that more closely approximates hard systematic resampling (Appendix~\ref{app:empirical_analysis}). The alternative is the unsorted order the filter produces (inherited from the previous resampling step), which allows soft assignments to mix spatially distant particles. We compare the two orderings with DSR on Lorenz-63 under the Gaussian point spread observation map~\eqref{eq:lorenz_obs_map}.

\begin{table}[ht]
\centering
\caption{Effect of particle ordering on DSR performance for Lorenz-63 at
SMNR $= 10$ and $0$~dB. Results show mean $\pm$ std over 10 runs.}
\label{tab:sorting_ablation}
\small
\begin{tabular}{lccccc}
\toprule
 & State Est. & \multicolumn{4}{c}{Transition Model Quality} \\
\cmidrule(lr){2-2} \cmidrule(lr){3-6}
Ordering & NMSE (dB) & MSE & MSE-5-step & LL & KLD \\
\midrule
\multicolumn{6}{c}{SMNR $= 10$~dB} \\
\midrule
Morton & $-29.94{\scriptstyle\pm 0.28}$ & $0.47{\scriptstyle\pm 0.24}$ & $3.38{\scriptstyle\pm 2.11}$ & $-1.98{\scriptstyle\pm 0.27}$ & $3.60{\scriptstyle\pm 1.04}$ \\
Unsorted & $-29.96{\scriptstyle\pm 0.34}$ & $0.44{\scriptstyle\pm 0.20}$ & $2.31{\scriptstyle\pm 0.80}$ & $-1.92{\scriptstyle\pm 0.21}$ & $3.59{\scriptstyle\pm 0.80}$ \\
\midrule
\multicolumn{6}{c}{SMNR $= 0$~dB} \\
\midrule
Morton & $-18.28{\scriptstyle\pm 0.46}$ & $0.54{\scriptstyle\pm 0.33}$ & $3.70{\scriptstyle\pm 2.87}$ & $-3.06{\scriptstyle\pm 0.24}$ & $13.41{\scriptstyle\pm 1.94}$ \\
Unsorted & $-18.43{\scriptstyle\pm 0.59}$ & $0.68{\scriptstyle\pm 0.18}$ & $6.55{\scriptstyle\pm 4.64}$ & $-3.19{\scriptstyle\pm 0.15}$ & $14.43{\scriptstyle\pm 2.23}$ \\
\bottomrule
\end{tabular}
\end{table}

The results can be seen in Table~\ref{tab:sorting_ablation}. At $10$~dB, the two orderings are essentially indistinguishable, with overlapping error bars on every metric. At $0$~dB, the picture is mixed: Morton sorting gives a better transition model (MSE-5-step of $3.70$ vs.\ $6.55$, but within one standard deviation), while unsorted ordering gives marginally better state estimation ($-18.43$ vs.\ $-18.28$~dB NMSE), though this is within one standard deviation. A plausible reading is that the two orderings trade off along the dual role of the soft transport plan. Morton sorting produces a closer approximation to hard systematic resampling, preserving the spatial coherence that helps dynamics learning when weights become concentrated. Unsorted ordering mixes more distant particles, which softens the relaxation further; this extra diversity may help state estimation through broader particle coverage, but at the cost of a less structured gradient signal for the transition model. Overall, DSR is relatively robust to the choice of ordering, with Morton sorting offering a modest advantage for transition model learning at low SNR. Studying the interaction between particle ordering and posterior geometry, for instance, under multimodal or heavy-tailed posteriors, is an interesting direction for future work.

\subsection{EROT regularization sensitivity at low SNR}
\label{app:erot_eps}

Table~\ref{tab:erot_eps_ablation} reports EROT performance on Lorenz-63 at $0$ dB SMNR for $\varepsilon \in \{0.5, 1.0\}$, alongside DSR for reference. The default $\varepsilon = 0.5$, as used in~\cite{corenflos2021differentiable, vannierop2025deepvsmc}, exhibits severe training instability at this noise level, with MSE-5-step standard deviation exceeding the mean. Increasing $\varepsilon$ to $1.0$ stabilizes training and improves mean performance across all metrics, but DSR still outperforms EROT on every measure. Training time drops slightly with $\varepsilon = 1.0$ ($40.35$ s vs. $42.94$ s per epoch), consistent with the Sinkhorn iterations stopping earlier at the larger $\varepsilon$.

\begin{table}[ht]
\centering
\caption{EROT regularization sensitivity on Lorenz-63 at SMNR = $0$ dB.
Results show mean $\pm$ std over $10$ runs. DSR with $\tau = 0.1$ is
included for reference. Best in \textbf{bold}.}
\label{tab:erot_eps_ablation}
\setlength{\tabcolsep}{3pt}
\small
\begin{tabular}{lcccccc}
\toprule
& \multicolumn{1}{c}{State Est.} & \multicolumn{4}{c}{Transition Model Quality} & \\
\cmidrule(lr){2-2} \cmidrule(lr){3-6}
Method & NMSE (dB) & MSE & MSE-5-step & LL & KLD & Time/epoch (s) \\
\midrule
EROT ($\varepsilon = 0.5$) & $-12.57{\scriptstyle\pm 1.64}$ & $4.79{\scriptstyle\pm 5.65}$ & $48.41{\scriptstyle\pm 60.72}$ & $-4.28{\scriptstyle\pm 1.54}$ & $33.82{\scriptstyle\pm 29.79}$ & $42.94{\scriptstyle\pm 0.77}$ \\
EROT ($\varepsilon = 1.0$) & $-13.47{\scriptstyle\pm 0.95}$ & $0.57{\scriptstyle\pm 0.43}$ & $9.19{\scriptstyle\pm 7.58}$ & $-3.49{\scriptstyle\pm 0.09}$ & $20.20{\scriptstyle\pm 2.33}$ & $40.35{\scriptstyle\pm 0.39}$ \\
DSR ($\tau = 0.1$) & $\mathbf{-18.28}{\scriptstyle\pm 0.46}$ & $\mathbf{0.54}{\scriptstyle\pm 0.33}$ & $\mathbf{3.70}{\scriptstyle\pm 2.87}$ & $\mathbf{-3.06}{\scriptstyle\pm 0.24}$ & $\mathbf{13.41}{\scriptstyle\pm 1.94}$ & $\mathbf{18.39}{\scriptstyle\pm 0.16}$ \\
\bottomrule
\end{tabular}
\end{table}

\section{Multimodal Posteriors and Inter-modal Mass}\label{app:multimodal}
At finite $\tau$, DSR forms convex combinations of ancestors, which can place mass between well-separated modes of a multimodal posterior. To study this phenomenon, we study a static two-component Gaussian (i.e., a mixture of two Gaussians). We draw particles from this two-component Gaussian, perform resampling, and study the intermodal mass. The two-component Gaussian is given by
\begin{equation}
\tfrac{1}{2}\,\mathcal{N}\!\big(-\tfrac{\Delta}{2}\mathbf{e}_0,\, \sigma^2 \mathbf{I}_d\big) + \tfrac{1}{2}\,\mathcal{N}\!\big(+\tfrac{\Delta}{2}\mathbf{e}_0,\, \sigma^2 \mathbf{I}_d\big),
\end{equation}
where $\Delta$ is the mode separation, $\mathbf{e}_0 = (1, 0, \ldots, 0)^\top$, $d$ is the dimension, and $\sigma^2$ is a scalar for the covariance. The modes are separated along coordinate $0$ only, while coordinates $1, \ldots, d-1$ carry isotropic noise and no mode information. We use $N = 100$, $\Delta = 10$, $\sigma = 0.3$ (a separation of $\approx 33$ within-mode standard deviations), and measure the resampled mass in the inter-modal region $|x_0| < 3$. Particles are randomly permuted before resampling, and we use the manuscript defaults ($\alpha = 0.5$; $\tau = 0.1$; $\varepsilon = 0.5$ with at most $100$ Sinkhorn iterations). We report the inter-modal mass as mean $\pm$ std over $10^4$ simulations.

Any scheme that copies ancestors places zero mass in the region by construction, since no particle lies there; this holds for hard systematic and soft multinomial resampling. Any inter-modal mass under DSR or EROT is therefore attributable to the relaxation. Table~\ref{tab:intermodal_mass} reports the results.

\begin{table}[ht]
\centering
\caption{Inter-modal mass (resampled mass in $|x_0| < 3$) for a static two-component Gaussian as a function of dimension $d$. Lower is better; $0$ means no mass placed between the modes. Results show mean $\pm$ std over $10^4$ simulations.}
\label{tab:intermodal_mass}
\small
\begin{tabular}{lccccc}
\toprule
Method & $d = 1$ & $d = 2$ & $d = 3$ & $d = 5$ & $d = 10$ \\
\midrule
Hard SR & $0.000{\scriptstyle\pm 0.000}$ & $0.000{\scriptstyle\pm 0.000}$ & $0.000{\scriptstyle\pm 0.000}$ & $0.000{\scriptstyle\pm 0.000}$ & $0.000{\scriptstyle\pm 0.000}$ \\
Soft & $0.000{\scriptstyle\pm 0.000}$ & $0.000{\scriptstyle\pm 0.000}$ & $0.000{\scriptstyle\pm 0.000}$ & $0.000{\scriptstyle\pm 0.000}$ & $0.000{\scriptstyle\pm 0.000}$ \\
EROT & $0.000{\scriptstyle\pm 0.000}$ & $0.000{\scriptstyle\pm 0.000}$ & $0.747{\scriptstyle\pm 0.052}$ & $1.000{\scriptstyle\pm 0.000}$ & $1.000{\scriptstyle\pm 0.000}$ \\
DSR (Morton) & $0.287{\scriptstyle\pm 0.005}$ & $0.283{\scriptstyle\pm 0.005}$ & $0.282{\scriptstyle\pm 0.005}$ & $0.282{\scriptstyle\pm 0.005}$ & $0.282{\scriptstyle\pm 0.005}$ \\
DSR (w/o Morton) & $1.000{\scriptstyle\pm 0.000}$ & $1.000{\scriptstyle\pm 0.000}$ & $1.000{\scriptstyle\pm 0.000}$ & $1.000{\scriptstyle\pm 0.000}$ & $1.000{\scriptstyle\pm 0.000}$ \\
\bottomrule
\end{tabular}
\end{table}

The results support the use of Morton sorting in this setting. Without Morton sorting, every resampled particle lands between the modes; with Morton sorting the inter-modal mass drops to $\approx 0.28$, and it does not degrade over this range of $d$. EROT, by contrast, is sensitive to dimension: for $d \ge 3$ it places most resampled mass between the modes. Morton sorting thus substantially reduces inter-modal mass relative to no sorting for dimensions up to $10$. Multimodal posteriors remain challenging for particle filters in general~\cite{vermaak2023mixturepf,thrun2001robustmc}, and investigating differentiable resampling under such posteriors is an interesting direction for future work.

\section{Additional Linear Gaussian Experiments}\label{app:linear_gaussian_extra}
The linear Gaussian SSM of Section~\ref{sec:experiments}, as the state transition model is known, admits optimal filtering recursion via the Kalman filter, which lets us approach two questions not accessible in the chaotic or real-world settings: how the marginal likelihood at evaluation time decomposes across methods, and how the DSR gradient compares to an unbiased VSMC gradient.

\subsection{Marginal likelihood at evaluation time}\label{app:marginal_likelihood}
At evaluation time all methods use hard systematic resampling, so $\log \hat{p}(\mathbf{y}_{1:T})$ is directly comparable across methods, and on the linear Gaussian model the exact log-evidence is available from the Kalman filter. This yields the decomposition
\begin{equation}
\underbrace{\log p_{\mathrm{true}}(\mathbf{y}_{1:T}) - \log p_{\hat{\pmb\phi}}(\mathbf{y}_{1:T})}_{\text{model error}}
\;+\;
\underbrace{\log p_{\hat{\pmb\phi}}(\mathbf{y}_{1:T}) - \mathbb{E}[\log \hat{p}_N(\mathbf{y}_{1:T})]}_{\text{Jensen's gap}},
\end{equation}
where the model error measures how close the learned transition model is to the data-generating model, and the Jensen's gap measures how well an $N$-particle filter approximates the evidence of that learned model. Using the trained models from Section~\ref{sec:experiments} on the linear Gaussian SSM with default parameters ($T = 100$, SMNR $= 10$~dB, mean $\pm$ std over model seeds on $10$ shared sequences), we obtain Table~\ref{tab:marginal_likelihood}.

\begin{table}[ht]
\centering
\caption{Evaluation-time evidence decomposition on the linear Gaussian SSM ($T = 100$, SMNR $= 10$~dB). The true log-evidence is $-232.367$. Results show mean $\pm$ std over model seeds.}
\label{tab:marginal_likelihood}
\small
\begin{tabular}{lcccc}
\toprule
Method & $\log p_{\hat{\pmb\phi}}$ & $\mathbb{E}[\log \hat{p}_N]$ & model error & Jensen's gap \\
\midrule
DSR  & $-232.942{\scriptstyle\pm 0.087}$ & $\mathbf{-235.138}{\scriptstyle\pm 0.385}$ & $0.575{\scriptstyle\pm 0.087}$ & $\mathbf{2.196}{\scriptstyle\pm 0.302}$ \\
EROT & $-233.065{\scriptstyle\pm 0.104}$ & $-235.851{\scriptstyle\pm 0.414}$ & $0.699{\scriptstyle\pm 0.104}$ & $2.786{\scriptstyle\pm 0.467}$ \\
Hard & $\mathbf{-232.795}{\scriptstyle\pm 0.038}$ & $-236.147{\scriptstyle\pm 0.924}$ & $\mathbf{0.428}{\scriptstyle\pm 0.038}$ & $3.352{\scriptstyle\pm 0.901}$ \\
Soft & $-232.996{\scriptstyle\pm 0.138}$ & $-237.215{\scriptstyle\pm 0.749}$ & $0.630{\scriptstyle\pm 0.138}$ & $4.218{\scriptstyle\pm 0.669}$ \\
\bottomrule
\end{tabular}
\end{table}

All four methods are within $0.7$ of the true evidence. Hard SR attains the smallest model error ($0.428$ versus $0.575$ for DSR, a gap outside the spread over seeds), but the methods separate on the Jensen's gap, where DSR attains $2.196$ against $3.352$ for hard SR and with roughly a third of the spread ($\pm .302$ versus $\pm .901$). The two columns measure different things: since the optimised quantity is $\mathbb{E}[\log \hat{p}_N]$ rather than $\log p$, these results indicate that DSR learns parameters whose evidence the finite-particle filter approximates more accurately.

\subsection{Comparison against unbiased VSMC gradients}\label{app:unbiased_gradient}
We compare the DSR gradient against an unbiased VSMC gradient. Here DSR is the plain relaxation of Eq.~\ref{eq:soft_assignment}, without Morton sorting or the gradient clip. This is to study the gradient bias induced by DSR. We follow the linear Gaussian experiment of \cite{naesseth2018variational} adapted to our setting. We use the linear Gaussian SSM introduced in Sec.~\ref{sec:linear_gaussian_experiment}, using SMNR $= 10$~dB with $N = 4$, $T = 4$, and additionally a two-dimensional version (a single rotation block) at the same $N$ and $T$. For each we generate one observation sequence $\mathbf{y}_{1:T}$ that every run below targets. Gradients are taken with respect to all $d^2$ entries of $\mathbf{A}$ and the $d$ diagonal entries of $\mathbf{Q} = \mathrm{diag}(\mathrm{softplus}(\pmb\ell))$, evaluated at the data-generating parameters. The unbiased gradient $\nabla\mathcal{L}$ is estimated by averaging over $1000$ bootstrap noise draws and exhaustively summing over ancestor vectors (the vectors are the systematic combs induced by a single uniform draw). Let $\hat{\mathbf{g}}$ denote an estimate obtained by averaging $n = 1600 \, (= 100/4 \times 64)$ independent SMC runs, matching the number of observations consumed in our training configuration (Appendix~\ref{sec:training_arch_config}), each run random through its bootstrap noise and resampling offsets. We report
\begin{equation}
\mathrm{Bias} = \big\|\mathbb{E}[\hat{\mathbf{g}}] - \nabla\mathcal{L}\big\|, \quad
\mathrm{Variance} = \mathbb{E}\big\|\hat{\mathbf{g}} - \mathbb{E}[\hat{\mathbf{g}}]\big\|^2, \quad
\mathrm{MSE} = \mathbb{E}\big\|\hat{\mathbf{g}} - \nabla\mathcal{L}\big\|^2 = \mathrm{Bias}^2 + \mathrm{Variance}.
\end{equation}
We estimate $\mathbb{E}[\hat{\mathbf{g}}]$ from the same $1000$ bootstrap draws with $500$ offset sequences ($5 \times 10^5$ runs), and the variance separately from $200$ independent realisations of $\hat{\mathbf{g}}$ at $n = 1600$. Results are in Table~\ref{tab:unbiased_gradient}.

\begin{table}[ht]
\centering
\caption{Gradient bias, variance, and MSE against the unbiased VSMC gradient on the linear Gaussian SSM. Hard SR drops the score-function term. Results show mean $\pm$ standard errors.}
\label{tab:unbiased_gradient}
\small
\begin{tabular}{lccc}
\toprule
Method & Bias & Variance & MSE \\
\midrule
\multicolumn{4}{c}{(a) $d = 2$, $N = 4$, $T = 4$; \ $\|\nabla\mathcal{L}\| = 9.64$} \\
\midrule
Hard SR & $2.324{\scriptstyle\pm 0.014}$ & $0.308{\scriptstyle\pm 0.016}$ & $5.707{\scriptstyle\pm 0.067}$ \\
DSR ($\tau = 1.0$) & $1.256{\scriptstyle\pm 0.000}$ & $0.241{\scriptstyle\pm 0.014}$ & $1.818{\scriptstyle\pm 0.014}$ \\
DSR ($\tau = 0.5$) & $1.257{\scriptstyle\pm 0.001}$ & $0.245{\scriptstyle\pm 0.014}$ & $1.825{\scriptstyle\pm 0.014}$ \\
DSR ($\tau = 0.1$) & $0.764{\scriptstyle\pm 0.020}$ & $0.389{\scriptstyle\pm 0.031}$ & $0.973{\scriptstyle\pm 0.043}$ \\
DSR ($\tau = 0.05$) & $0.370{\scriptstyle\pm 0.050}$ & $0.723{\scriptstyle\pm 0.037}$ & $0.860{\scriptstyle\pm 0.052}$ \\
DSR ($\tau = 0.01$) & $0.343{\scriptstyle\pm 0.227}$ & $13.409{\scriptstyle\pm 2.583}$ & $13.526{\scriptstyle\pm 2.587}$ \\
\midrule
\multicolumn{4}{c}{(b) $d = 4$, $N = 4$, $T = 4$; \ $\|\nabla\mathcal{L}\| = 20.50$} \\
\midrule
Hard SR & $3.627{\scriptstyle\pm 0.028}$ & $2.092{\scriptstyle\pm 0.058}$ & $15.248{\scriptstyle\pm 0.212}$ \\
DSR ($\tau = 1.0$) & $2.869{\scriptstyle\pm 0.001}$ & $2.073{\scriptstyle\pm 0.057}$ & $10.304{\scriptstyle\pm 0.057}$ \\
DSR ($\tau = 0.5$) & $2.914{\scriptstyle\pm 0.005}$ & $2.127{\scriptstyle\pm 0.058}$ & $10.615{\scriptstyle\pm 0.064}$ \\
DSR ($\tau = 0.1$) & $2.131{\scriptstyle\pm 0.088}$ & $4.895{\scriptstyle\pm 0.192}$ & $9.437{\scriptstyle\pm 0.420}$ \\
DSR ($\tau = 0.05$) & $1.720{\scriptstyle\pm 0.190}$ & $16.382{\scriptstyle\pm 1.864}$ & $19.340{\scriptstyle\pm 1.975}$ \\
DSR ($\tau = 0.01$) & $1.412{\scriptstyle\pm 1.216}$ & $384.805{\scriptstyle\pm 102.564}$ & $386.799{\scriptstyle\pm 102.621}$ \\
\bottomrule
\end{tabular}
\end{table}

DSR has a smaller gradient bias than hard SR (which drops $g_{\mathrm{score}}$) at every temperature and in both settings, and the bias decreases as $\tau$ decreases: at $\tau = 1.0$ the bias drops from $2.324$ to $1.256$ ($d = 2$) and from $3.627$ to $2.869$ ($d = 4$) with essentially unchanged variance. Decreasing $\tau$ further keeps reducing the bias but increases the variance sharply. At the operating temperature $\tau = 0.1$, DSR is competitive on MSE with hard SR and the other $\tau$ values, attaining the lowest MSE at $d = 4$ and the second lowest at $d = 2$. This experiment makes the bias--variance tradeoff explicit and motivates the annealing strategy studied in Appendix~\ref{app:annealing}.



\end{document}